\title{The Causal Information Bottleneck and Optimal Causal Variable Abstractions}
\author[1]{\href{mailto:<f.simoes@uu.nl>?Subject=Your UAI 2025 paper}{Francisco~N.~F.~Q.~Simoes}{}}
\author[1]{Mehdi~Dastani}
\author[1]{Thijs~van~Ommen}
\affil[1]{%
  Department~of~Information~and~Computing~Sciences\\
  Utrecht~University\\
  The~Netherlands
}
\newcommand{\diagramscale}{0.7}
\def\eqref#1{equation~\ref{#1}}
\def\1{\bm{1}}
\DeclareMathAlphabet{\mathsfit}{\encodingdefault}{\sfdefault}{m}{sl}
\SetMathAlphabet{\mathsfit}{bold}{\encodingdefault}{\sfdefault}{bx}{n}
\newcommand{\E}{\mathbb{E}}
\newcommand{\R}{\mathbb{R}}
\DeclareMathOperator*{\argmin}{arg\,min}
\newcommand{\addref}{\textcolor{red}{[ADD REF]}}
\let\todo\undefined %
\DeclareMathOperator{\cC}{\mathfrak{C}} 
 \newcommand{\dop}{\mathit{do}}
\newcommand{\Pa}{\mathrm{Pa}}
\newcommand{\Ch}{\mathrm{Ch}}
\newcommand{\abs}[1]{\lvert #1 \rvert} 
\newcommand{\C}{\mathfrak{C}}
\newcommand{\LIB}{\mathcal{L}^{\beta}_{\mathrm{IB}}}
\newcommand{\LCIB}{\mathcal{L}^{\beta}_{\mathrm{CIB}}}
\newcommand{\wLCIB}{\mathcal{L}^{\gamma}_{\mathrm{wCIB}}}
\newcommand{\LCIBgamma}{\mathcal{L}^{\gamma}_{\mathrm{CIB}}}
\newcommand{\suchthat}{\ \text{ s.t. }\ }
\newcommand{\VI}{\mathrm{VI}}
\newcommand{\quotient}[2]{{\raisebox{.1em}{$#1$}\left/\raisebox{-.1em}{$#2$}\right.}}
\newcommand{\supp}{\mathrm{supp}}
\definecolor{gold}{rgb}{1.0, 0.84, 0.0}
\newcommandx{\warning}[2][1=]{\todo[linecolor=red,backgroundcolor=red!25,bordercolor=red,#1]{#2}}
\newcommandx{\note}[2][1=]{\todo[linecolor=blue,backgroundcolor=blue!25,bordercolor=blue,#1]{#2}}
\newcommandx{\question}[2][1=]{\todo[linecolor=purple,backgroundcolor=purple!30,bordercolor=purple,#1]{#2}}
\newcommandx{\Q}[2][1=]{\todo[linecolor=red!125,backgroundcolor=red!50,bordercolor=red,#1]{#2}}
\newcommandx{\original}[2][1=]{\todo[linecolor=green,backgroundcolor=green!25,bordercolor=green,#1]{OG!: #2}}
\newcommandx{\optional}[2][1=]{\todo[linecolor=yellow,backgroundcolor=yellow!25,bordercolor=yellow,#1]{#2}}
\newtheorem{theorem}{Theorem}
\newtheorem{proposition}[theorem]{Proposition}
\newtheorem{definition}[theorem]{Definition}
\newtheorem{assumption}[theorem]{Assumption}
\theoremstyle{remark}
\newtheorem{remark}[theorem]{Remark}
\newtheorem*{notation*}{Notation}
\begin{document}
\maketitle

\begin{abstract}
  To effectively study complex causal systems, it is often useful to construct abstractions of parts of the system by discarding irrelevant details while preserving key features.
  The Information Bottleneck (IB) method is a widely used approach to construct variable abstractions by compressing random variables while retaining predictive power over a target variable.
  Traditional methods like IB are purely statistical and ignore underlying causal structures, making them ill-suited for causal tasks.
  We propose the Causal Information Bottleneck (CIB), a causal extension of the IB, which compresses a set of chosen variables while maintaining causal control over a target variable.
  This method produces abstractions of (sets of) variables which are causally interpretable, give us insight about the interactions between the abstracted variables and the target variable, and can be used when reasoning about interventions.
  We present experimental results demonstrating that the learned abstractions accurately capture causal relations as intended.
\end{abstract}

\section{Introduction}
\label{sec:introd}

Natural systems typically consist of a vast number of components and interactions, making them complex and challenging to study.
When investigating a specific scientific question, which is often of a causal nature, it is frequently possible to disregard many of these details, as they have a negligible impact on the outcome.
These details can then be abstracted away.
A classic example \citep{rubenstein2017causal,chalupka2017overview} is the relationship between particle velocities, temperature, and pressure.
To control the pressure on the walls of a room, it would presumably be necessary to consider how to manipulate the velocities of the approximately $10^{23}$ particles in the room.
However, accounting for the velocity of each individual particle is not required to achieve this.
Instead, considering solely interventions on the high-level variable of temperature is sufficient.
Other examples of such high-level features include large-scale weather phenomena like El Nino, which abstract away low-level sea surface temperature details irrelevant to resulting wind patterns \citep{chalupka2016unsupervised}, and visual features of images, which omit information not relevant to triggering neural spikes \citep{chalupka2017thesis}.
In general, having such an abstraction variable available can aid reasoning about interventions.

Methods that disregard the causal structure of a system when constructing abstractions may yield results that are uninformative or even misleading, particularly when the objective is to manipulate the system or gain causal insights.
The following running example will serve as a useful illustration of the potential drawbacks of neglecting causal considerations when learning abstractions.
Consider a mouse gene with four positions $s_{1}, s_{2}, s_{3}, s_{4}$ under study where nucleotides
may be mutated, corresponding to the binary variables $X_{i},\ i=1,\ldots,4$, which indicate whether there is a mutation at position $s_{i}$.
These mutations can interact in a complex manner with respect to a phenotype of interest $Y$, say the body mass of the mouse.
This type of complex interaction is known as epistasis \citep{phillips2008epistasis}.
One could create an abstraction $T$ of $X_{1}, X_{2}, X_{3}, X_{4}$ that would be blind to differences between mutation configurations $(X_{1}, X_{2}, X_{3}, X_{4})$ that do not provide information about $Y$.
With the current description, it may seem that there is no need for considering causality, and that one could simply use a purely statistical method to learn a good ``epistasis gauger'' $T$.
This is, however, not the case, since the $X_{i}$ and $T$ are typically confounded by the population structure, that is, ``any form of relatedness in the sample,
including ancestry differences or cryptic relatedness'' \citep{sul2018population}.
Based on the discussion in \citet{sul2018population}, we consider the population structure variable $S$ encoding the strain of the mice (laboratory vs wild-derived strains), which is associated with both distinct $X$ and body masses $Y$ (high in laboratory strains and low in wild-derived strains).
A possible resulting causal graph can be seen in \Cref{fig:mutations-example}.
As a result, mutations that are more prevalent in mice from the wild-derived strain will exhibit a strong correlation with low body mass, even if there is no underlying causal relationship between them.
This is an example of how population structure can work as a confounder.

The construction of abstractions, often in the form of representations, has been a prominent area of research in \emph{non-causal} machine learning for many years, with both theoretical and applied contributions.
A ``good'' representation/abstraction, should keep only the relevant information of the abstracted variable $X$ \citep{bishop2006pattern,goodfellow2016deep}.
It is essential to recognize that the determination of which information is spurious critically depends on the specific task at hand.
Building an abstraction is therefore a balancing act between keeping enough information for the task and avoiding including unnecessary details.
This is the insight which \citet{tishby2000information} build on.
The authors formalize the learning of the optimal abstraction $T$ for an input variable $X$ and a chosen target variable $Y$ as a minimization problem.
This problem involves finding a balance between compressing $X$ as much as possible and maintaining as much information about $Y$ as possible.
The functional proposed by \citet{tishby2000information} to be minimized is called the \emph{Information Bottleneck (IB) Lagrangian}, and it has inspired a significant body of work in representation learning (see for example \citet{alemi2016deep,kolchinsky2018caveats,artemy2019nonlinear,tishby2015bottleneck,achille2018information}).
The IB method, however, does not account for causality.
The learned abstractions cannot be used to reason causally about the system.
In particular, when $X$ and $Y$ are heavily confounded, the IB method will create an abstraction $T$ that preserves the information $X$ has about $Y$, but a significant portion of this information is spurious from a causal perspective.
This not only leads to sub-optimal compression but can also result in misleading conclusions, where the values of $T$ are mistakenly interpreted as corresponding to meaningful interventions.

  In this paper, we present a new method for learning abstractions of a set of input variables $X$ which retain the causal information between $X$ and a specified target variable $Y$.
  Namely, we introduce a causal version of the IB Lagrangian designed to attain its minima when the abstraction $T$ of $X$ compresses $X$ as much as possible while maintaining as much \emph{causal} control over $Y$ as desired, with the trade-off between these properties governed by a parameter $\beta$.
We derive the \emph{Causal Information Bottleneck (CIB) Lagrangian} by first establishing an axiomatic description of what constitutes an \emph{optimal causal variable abstraction}\footnotemark.
\footnotetext{We use the term ``variable abstraction'' instead of simply ``abstraction'' to make it clear that our focus is on abstractions of a (set of) variable(s), not of an entire model, for which the term ``causal abstraction'' is often used (see \Cref{sec:related-work})}
As in the original Information Bottleneck paper \citep{tishby2000information}, we take all variables to be discrete, and the abstractions are learned using solely the joint distribution.
The CIB method does not require access to the full DAG; it only requires that post-intervention distributions are either provided or identifiable.
In our experiments, the causal effect of $X$ on $Y$ is taken to be identifiable by the use of the backdoor criterion.
Our experiments confirm that optimal causal variable abstractions $T$ reveal the form of the causal interactions among the abstracted variables $X_{i}$.
Thus, and because $T$ preserves the causal control that $X$ has over $Y$, one can use $T$ instead of $X$ to reason about what interventions to perform.
$T$ can then be seen as an auxiliary variable which can give us insights about the causal relationship between $X$ and $Y$, and how to intervene on $X$.

The contributions of this paper can be stated as follows.
(i) We propose an axiomatic definition of \emph{optimal causal variable abstraction} (OCVA), which naturally extends previous non-causal definitions.
(ii) We derive a causal version of the Information Bottleneck Lagrangian from those axioms, thereby formulating the problem of optimal causal variable abstraction learning as a minimization problem.
(iii) We propose a definition of variable abstraction intervention, and obtain a backdoor criterion formula for variable abstractions, enabling us to compute an abstractions's post-intervention distribution from observational data.
As a supplementary result, we propose a definition of equivalence for variable abstractions and show that the variation of information between two variables representations is zero when they are equivalent.

All proofs can be found in the Appendix, which also contains various supplements to the main text.
The code repository containing the experiments is available at \url{github.com/francisco-simoes/cib-optimization-psagd}.

\section{Preliminaries}
\label{sec:preliminaries}

\paragraph{Causal Models}
A Structural Causal Model (SCM) provides a representation of a system's causal structure, analogous to a Bayesian network but with a causal interpretation.
An SCM $\C = (\mathbf{V}, \mathbf{N}, S, p_{\mathbf{N}})$ is comprised of endogenous variables $\mathbf{V}$, exogenous variables \textbf{N}, deterministic functions between them $S$, and a distribution over the noise variables $p_{\mathbf{N}}$.
\textbf{V} can consist of both observed and hidden variables.
Each SCM $\cC$ has an underlying DAG $G^{\cC}$, called its causal graph.
Each node in the DAG corresponds to an endogenous variable, while the edges stand for the causal relationships between them.
We denote the parents and children of an endogenous variable $X$ by $\Pa(X)$ and $\Ch(X)$, respectively.
Furthermore, we denote the range of a random variable $X$ by $R_{X}$ and its support by $\supp(X)$.
The value of each endogenous variable is determined by a deterministic function of its parent variables and an independent exogenous variable, which accounts for the system's randomness.
A key feature of SCMs is their capacity to model interventions on a variable $X$, which involve altering the variable's generating process.
This results in a new SCM with its own distribution, reflecting the system's state post-intervention.
The most common type of intervention is an atomic intervention, where a variable $X$ is set to a specific value $x$, effectively severing its connection to its parents and assigning a fixed value instead.
We denote such an intervention by $\dop(X=x)$, the resulting SCM by $\cC^{\dop(X=x)}$, and the post-intervention joint distribution of a set of variables $W$ by $p_{W}^{\dop(X=x)}(w)$ or $p(w\mid \dop(X=x))$.
For more details, see \Cref{sec:struct-caus-models}.

\paragraph{Causal Entropy and Causal Information Gain}
We now introduce two concepts: causal entropy and causal information gain, both of which are fundamental to our method.
For more details, see \Cref{sec:appendix-caus-entr-caus}, or refer to \citet{simoes2023causal}.
The causal entropy $H_{c}(Y\mid \dop(X))$ measures the average uncertainty remaining about the variable \( Y \) after we intervene on the variable \( X \).
This concept is closely related to conditional entropy but adapted for situations where interventions on \( X \), as opposed to conditioning on $X$, are considered.
The causal information gain $I_{c}(Y\mid \dop(X))$ extends the idea of mutual information to the causal domain.
It quantifies the reduction in uncertainty about \( Y \) after intervening on \( X \), offering a measure of the causal control that \( X \) exerts over \( Y \).
Thus, it tells us how much more we know about \( Y \) due to these interventions on \( X \).

\paragraph{The Information Bottleneck Lagrangian}
Let $X$ be a random variable.
By a variable abstraction (or v-abstraction for short) $T$ of $X$, we mean a variable that can only depend on $X$, whether deterministically or stochastically.
This means in particular that $T$ must be independent of $Y$ when conditioning on $X$.
This generalizes the notion of representation used by \citet{achille2018information} for cases with more variables than only $X$ and $Y$.
Furthermore, a v-abstraction $T$ is characterized by its encoder, which codifies how $T$ depends on $X$.
We formalize this as follows:

\begin{definition}[Variable Abstraction and Encoder]
  \label{def:rep}
  A random variable $T$ is a \emph{variable abstraction} (also called \emph{v-abstraction}) of a random variable $X$ if $T$ is a function of $X$ and an independent noise variable.
  The \emph{encoder} of the v-abstraction $T$ is the function $q_{T\mid X}\colon R_{T} \times R_{X} \to [0, 1]$ such that $q_{T\mid X}(t \mid x)$ is the conditional probability $q(t\mid x)$ of $T=t$ given $X=x$.
\end{definition}

In \citet{tishby2000information}, the authors aim at finding a v-abstraction\footnotemark $T$ of $X$ which ``keeps a fixed amount of meaningful information about the relevant signal $Y$ while minimizing the number of bits from the original signal $X$ (maximizing the compression).''
\footnotetext{They call them ``representations.'' We use the term (causal) variable abstraction instead, as ``representation'' is already used in the causality literature for tasks different from ours --- for example, in ``causal representation learning'' (see \Cref{sec:related-work}).}
This is accomplished by introducing the Information Bottleneck Lagrangian $\LIB[q_{T\mid X}] = I(X; T) - \beta I(Y; T)$, where $\beta$ is a non-negative parameter which manages the trade-off between compression (as measured by $I(X;T)$) and sufficiency (as measured by $I(Y;T)$).
The problem of finding such an abstraction becomes then the problem of finding an encoder $q_{T\mid X}$ which minimizes the $\LIB$.
They minimize this Lagrangian by adapting the Blahut-Arimoto algorithm from rate distortion theory \citep{blahut1972computation} to their case, resulting in a coordinate-descent optimization algorithm.
This adaptation hinges on viewing mutual information as a KL divergence.

The IB method proposed by \citet{tishby2000information} effectively solves the following minimization problem:
\begin{equation}
  \label{eq:tishby_min_problem}
  \argmin_{q_{T\mid X}} I(X; T)
  \suchthat
  \begin{cases}
    \forall x\in R_{X}, q_{T\mid X=x} \in \Delta^{\abs{R_{T}}-1}\\
    I(Y; T) = D
  \end{cases},
\end{equation}
where $\Delta^{\abs{R_{T}}-1}$ is the probability simplex, $q_{T\mid X}\in \mathbb{R}^{\abs{R_{T}}\cdot \abs{R_{X}}}$, and $D$ belongs to the sufficiency values compatible with the chosen $\beta$.
Notice that in particular $q_{T\mid X}$ is constrained to $\Delta \coloneqq \bigtimes_{x \in R_{X}} \Delta^{\abs{R_{T}} - 1}$ (see \Cref{sec:learning_algo}).

\section{Optimal Causal Variable Abstractions}
\label{sec:optimal-causal-rep}

For the remainder of the paper, let $X \subseteq \mathbf{V}$ be a set of endogenous variables of an SCM $\C = (\mathbf{V}, \mathbf{N}, S, p_{\mathbf{N}})$, $T$ be a v-abstraction of $X$ with encoder $q_{T\mid X}$, and $t$ be an element of $R_{T}$.

In a natural extension of the description of optimal v-abstraction in \Cref{sec:preliminaries} to the causal context, our problem can be described as finding v-abstractions $T$ of $X$ which retain a chosen amount $D$ of \emph{causal} information about the relevant signal $Y$ while minimizing the information that $T$ preserves about $X$.
We propose an axiomatic characterization of optimal causal v-abstraction to formally capture this description using information-theoretical quantities.
This can also be seen as a causal variant of the characterization of optimal representation from \citet{achille2018information}.
Since we use $I_{c}(Y\mid \dop(T))$ to measure the causal information that $T$ has about $Y$, and $I(X;T)$ is the information that $T$ keeps about $X$, the result is the following:

\begin{definition}[Optimal Causal Variable Abstraction]
  \label{def:optimal_causal_rep}
  A \emph{optimal causal variable abstraction} (OCVA) of $X$  at sufficiency $D$ is a v-abstraction $T$ of $X$ such that:
  \begin{enumerate}[label=(C\arabic*)]
    \item $T$ is \emph{interventionally $D$-sufficient} for the task $Y$, \emph{i.e.}, $I_{c}(Y\mid \dop(T)) = D$. \label{item:caus_max_suff}
    \item $I(X;T)$ is minimal among the variables $T$ satisfying \ref{item:caus_max_suff}.
  \end{enumerate}
\end{definition}

We can then formulate the problem of finding an OCVA as the following minimization problem:
\begin{equation}
  \label{eq:our_min_problem}
  \argmin_{q_{T\mid X}} I(X; T)
  \suchthat
  \begin{cases}
    \forall x\in R_{X}, q_{T\mid X=x} \in \Delta^{\abs{R_{T}}-1}\\
    I_{c}(Y\mid \dop(T)) = D
  \end{cases},
\end{equation}
where $q_{T\mid X} \in \mathbb{R}^{\abs{R_{T}}\cdot \abs{R_{X}}}$.
Recall that the IB Lagrangian can be seen as arising from the minimization problem in \Cref{eq:tishby_min_problem}.
Likewise, the CIB Lagrangian will be introduced to solve the minimization problem in \Cref{eq:our_min_problem}.

\section{The Causal Information Bottleneck}
\label{sec:caus-inform-bottl}

We can find the solution(s) to \Cref{eq:our_min_problem} within the minimizers of a Lagrangian.
Specifically, we can minimize (for some chosen $X$, $Y$ and $R_{T}$) the \emph{causal information bottleneck Lagrangian} $\LCIB$ defined as follows:
\begin{definition}[Causal Information Bottleneck]
The \emph{causal information bottleneck (CIB) Lagrangian} with trade-off parameter $\beta \ge 0$ is the function $\LCIB\colon \R^{\abs{R_{T}}} \times \R^{\abs{R_{X}}} \to \R$ given by
  \begin{equation}
  \label{eq:cib_lag}
    \LCIB[q_{T\mid X}] \coloneqq I(X;T) - \beta I_{c}(Y\mid\dop(T)).
  \end{equation}
\end{definition}
The trade-off parameter $\beta$ is the Lagrange multiplier for the interventional sufficiency constraint, and is taken to be fixed.
Notice that the constraint $q_{T\mid X} \in \Delta$ still needs to be enforced explicitly.
As for the IB Lagrangian, fixing $\beta$ restricts the values $D$ that $I_{c}(Y \mid \dop(T))$ can take when minimizing $\LCIB$ \citep{kolchinsky2018caveats}.
In general, larger $\beta$ favors interventional sufficiency over compression, resulting in larger values of $D$ and $I(X;T)$.
We assume $\beta$ to be a non-negative real number in view of its interpretation as a trade-off parameter.

As discussed in \Cref{sec:learning_algo}, developing a method to minimize \Cref{eq:cib_lag} analogous to that of \citet{tishby2000information} is challenging, if not infeasible.
Instead, in our experiments we employ constrained optimization iterative local search algorithms based on gradient descent to find the minima of $\LCIB$ while adhering to the probability simplices constraint $q_{T\mid X} \in \Delta$.

\section{Interventions on Abstractions}
\label{sec:interv-abst}

As discussed in \Cref{sec:introd}, there is often interest in creating abstractions that can be intervened on.
An intervention on a v-abstraction $T$ of $X$ must correspond to interventions on the low-level variables $X$.
Therefore, an intervention on $T$ will induce a distribution over the possible interventions on $X$, which we denote by $p^{\star}(x\mid t)$ and will refer to as the ``intervention decoder''.
We require that $p^{\star}(x\mid t)$ be compatible with the encoder\footnotemark $q(t\mid x)$, in the sense that both must agree with a common joint distribution over $T$ and $X$.
\footnotetext{See \Cref{rem:int-decoder-def} (\Cref{sec:appendix-int-on-abst}) for an extended discussion about this requirement.}

\begin{definition}[Intervention Decoder]
  The \emph{intervention decoder} $p^{\star}(x\mid t)$ for the v-abstraction $T$ of $X$ is computed from the encoder $q_{T\mid X}$ using the Bayes rule with a chosen prior $p^{\star}(x)$, that is,
  \begin{equation}
    \label{eq:int-decoder}
    p^{\star}(x\mid t) \vcentcolon= \frac{q(t\mid x) p^{\star}(x)}{\sum_{\dot{x}} q(t\mid \dot{x}) p^{\star}(\dot{x})}
  \end{equation}
\end{definition}
Notice that a choice of prior over the possible atomic distributions on $X$ still needs to be made.
In practice, we will make the choice that $p^{\star}(x)$ be uniform, so that $p^{\star}(x\mid t) = \frac{q(t\mid x)}{\sum_{\dot{x}} q(t\mid \dot{x})}$.

In order to compute the effect of intervening on $T$ on the SCM variables $\mathbf{V}$, we compute the effect of the atomic interventions $\dop(X=x)$, and weight them with the likelihood of that intervention using the intervention decoder.
\begin{definition}[Variable Abstraction Intervention]
  \label{def:abst-intervention}
  The \emph{v-abstraction intervention distribution} $p_{\mathbf{V}}^{\dop(T=t)}$ is the weighted average of the atomic intervention distributions $p_{\mathbf{V}}^{\dop(X=x)}$ over $x\in R_{X}$, where the weights are given by the intervention decoder $p^{\star}(x\mid t)$.
  That is,
  \begin{equation}
    \label{eq:abst-intervention}
    p(\mathbf{v} \mid \dop(T=t)) \vcentcolon= \sum_{x} p^{\star}(x\mid t) p_{\mathbf{V}}^{\dop(X=x)}(\mathbf{v}).
  \end{equation}
\end{definition}

\section{Backdoor Adjustment for Abstractions}
\label{sec:backdoor-for-abst}
In order to learn the OCVA, the learning algorithm will need to estimate $\LCIB = I(X;T) - \beta I_{c}(Y\mid \dop(T))$ at each iteration step, making use of the encoder $q_{T\mid X}$ at that iteration and the joint $p_{\mathbf{V}}$.
The compression term $I(X;T)$ can be directly computed using $p_{X}$ and the encoder, while the interventional sufficiency term $I_{c}(Y\mid \dop(T))$ demands the computation of $p(y\mid \dop(T=t))$.
\Cref{eq:pydot} allows us to write $p(y\mid \dop(T=t))$ in terms of the encoder $q(t\mid x)$ and the intervention distributions $p(y\mid \dop(X=x))$.
Hence, $p(y\mid \dop(T=t))$ is identifiable (\emph{i.e.}, computable from the joint $p_{\mathbf{V}}$) if $p(y\mid \dop(X=x))$ also is.
Satisfaction of the \emph{backdoor criterion} is one of the most common ways to have identifiability.
In case a set $Z\subseteq \mathbf{V}$ exists satisfying the backdoor criterion relative to $(X, Y)$, then $p(y\mid\dop(X=x))$ is identifiable, and is given by the backdoor adjustment formula \citep{pearl2009causality}.
From the definition of v-abstraction intervention and the backdoor adjustment formula, one can derive a backdoor adjustment formula for v-abstractions.
\begin{proposition}[Backdoor Adjustment Formula for Abstractions]
  \label{prop:backdoor-for-abs}
  Let $T$ be a v-abstraction of $X$ with encoder $q(t\mid x)$, and $Z$ be a set of variables of $\mathfrak{C}$ satisfying the backdoor criterion relative to $(X, Y)$ in $\mathfrak{C}$.
  Then the v-abstraction intervention distribution for $Y$ is identifiable and is given by:
  \begin{equation}
    \label{eq:backdoor-for-abstractions}
    \begin{split}
      p(y \mid& \dop(T=t)) = \sum_{z} p(z) \sum_{x} p(y \mid x,z) p^{\star}(x\mid t) \\
            &= \sum_{z} p(z) \sum_{x} p(y \mid x,z) \frac{q(t\mid x) p^{\star}(x)}{\sum_{\dot{x}}q(t\mid \dot{x})p^{\star}(\dot{x})},
    \end{split}
  \end{equation}
  where $p^{\star}(x\mid t)$ is the intervention decoder for $T$, and $p^{\star}(x)$ is the prior over interventions on $X$.
\end{proposition}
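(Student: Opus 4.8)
The plan is to reduce the claim to the ordinary backdoor adjustment formula applied to atomic interventions on $X$, exploiting the fact that the representation intervention distribution is, by definition, a mixture of such atomic interventions. First I would start from \Cref{eq:abst-intervention}, which writes $p(\mathbf{v}\mid\dop(T=t))$ as the $p^{\star}(\cdot\mid t)$-weighted average of the atomic post-intervention joints $p^{\dop(X=x)}_{\mathbf{V}}$. Marginalizing both sides over all endogenous variables other than $Y$, and using that the mixing weights $p^{\star}(x\mid t)$ do not depend on $\mathbf{v}$, the marginalization commutes with the finite sum over $x$, yielding $p(y\mid\dop(T=t)) = \sum_{x} p^{\star}(x\mid t)\, p(y\mid\dop(X=x))$ (this is \Cref{eq:pydot}).

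Next, since $Z$ satisfies the backdoor criterion relative to $(X,Y)$ in $\mathfrak{C}$, Pearl's backdoor adjustment formula gives $p(y\mid\dop(X=x)) = \sum_{z} p(z)\, p(y\mid x,z)$, so in particular each $p(y\mid\dop(X=x))$ is identifiable from $p_{\mathbf{V}}$. Substituting this into the previous identity and interchanging the two finite sums over $x$ and $z$ produces $p(y\mid\dop(T=t)) = \sum_{z} p(z)\sum_{x} p(y\mid x,z)\, p^{\star}(x\mid t)$, which is the first line of \Cref{eq:backdoor-for-abstractions}. Finally, inserting the definition of the intervention decoder from \Cref{eq:int-decoder}, namely $p^{\star}(x\mid t) = q(t\mid x)p^{\star}(x)\big/\sum_{\dot{x}} q(t\mid\dot{x})p^{\star}(\dot{x})$, gives the second line. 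Identifiability of $p(y\mid\dop(T=t))$ then follows because every factor on the right-hand side — $p(z)$, $p(y\mid x,z)$, and (through the encoder and the chosen prior) $p^{\star}(x\mid t)$ — is obtainable from the observational joint $p_{\mathbf{V}}$ together with the fixed encoder $q_{T\mid X}$.

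The only genuine subtlety, and the step I would be most careful about, is that $T$ is not a vertex of the SCM $\mathfrak{C}$, so the backdoor criterion cannot be applied to $T$ directly; the argument must be routed through the atomic interventions $\dop(X=x)$, for which the criterion is assumed to hold. I would also explicitly check the marginalization–summation interchange in the first step, but this is immediate in the discrete setting: the sums are finite, and $\sum_{\mathbf{v}} p^{\dop(X=x)}_{\mathbf{V}}(\mathbf{v})$ restricted to the coordinate $Y=y$ is exactly $p(y\mid\dop(X=x))$ by definition of the atomic post-intervention marginal. No further machinery is required.
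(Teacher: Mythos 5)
Your proposal is correct and follows essentially the same route as the paper's proof: reduce $p(y\mid\dop(T=t))$ to the mixture $\sum_{x}p^{\star}(x\mid t)\,p(y\mid\dop(X=x))$ via the definition of the representation intervention (\Cref{eq:pydot}), apply Pearl's backdoor adjustment to each atomic term, and substitute the intervention decoder from \Cref{eq:int-decoder}. Your added care about the marginalization step and about the fact that $T$ is not a node of $\mathfrak{C}$ is sound but does not change the argument.
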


\begin{remark}[Interpretation of the Backdoor Adjustment Formula for Abstractions]
  Since $Z$ meets the backdoor criterion for $(X, Y)$, the probability $p(y\mid x, z)$ can be seen as the probability of observing $Y=y$ in the subpopulation $Z=z$ given that one does $\dop(X=x)$.
  Furthermore, $p^{\star}(x\mid t)$ is the probability that $X$ is set to $x$, given that $T$ is set to $t$.
  Thus, the sum over $x$ can be seen as the average effect of $X$ on $Y$ in the subpopulation $Z=z$ and given that $T$ is set to $t$.
  Hence the post-intervention probability $p(y\mid \dop(t))$ is the average over all subpopulations $Z=z$ of the average effects of $X$ on $Y$, given that $T=t$.
\end{remark}

\Cref{prop:backdoor-for-abs} allows us to express the interventional sufficiency term of the CIB in terms of the encoder and the joint $p_{XYZ}$ as long as there is a set $Z$ (of observed variables) which satisfies the backdoor criterion relative to $(X,Y)$.
The expression for $I_{c}(Y\mid \dop(T))$, and thus for the CIB, will have the same form in all such cases.
From now on, we will assume that such a set $Z$ is observed and thus can be conditioned on.
\begin{assumption}
  \label{assump:backdoor}
  A set of random variables $Z$ satisfying the backdoor criterion \citep{pearl2009causality} relative to $(X, Y)$ in the SCM $\mathfrak{C}$ is observed.
\end{assumption}

\Cref{assump:backdoor} is introduced for the sake of convenience and simplifying the algorithm.
If violated, $p(y\mid \dop(t))$ may still be identifiable.
In that case, one can make use of do-calculus \citep{peters2017elements} to obtain an expression for $p(y\mid \dop(x))$, and thus also for $p(y\mid \dop(t))$, which will hold for that specific causal graph.

\section{Comparing Variable Abstractions}
\label{sec:comp-abstr}

After learning a v-abstraction \( T_{1} \), we may want to compare it with another v-abstraction \( T_{2} \), which could either be one learned earlier or one considered as the ground truth.
Simple equality of their encoders is not an appropriate criterion for this comparison.
Two v-abstractions might have different encoders and still be ``equivalent'' in the sense that they coincide when the values of the v-abstraction are relabeled.
This is especially apparent in the case of completely deterministic v-abstractions.
For example, consider two binary v-abstractions \( T_{1} \) and \( T_{2} \) of a low-level variable \( X \) with range \( R_{X} = \{0, 1, 2\} \), defined by deterministic functions \( \phi_{T_{1}} \) and \( \phi_{T_{2}} \).
Suppose \( \phi_{T_{1}} \) maps 0 and 1 to 0, and 2 to 1, while \( \phi_{T_{2}} \) maps 0 and 1 to 1, and 2 to 0.
Intuitively, $T_{1}$ and $T_{2}$ represent the same v-abstraction because relabeling the values of $T_{1}$ (\emph{i.e.}, swapping 0 and 1) yields a v-abstraction that produces the same values as $T_{2}$ for the same low-level values of $X$. Formally, and extending to the non-deterministic case, this is to say that equivalence arises whenever the conditional distributions are identical up to a bijection \( \sigma \) of the values of the \( T_{i} \).
This leads to the following definition:
\begin{definition}[Equivalent Variable Abstractions]
  Two v-abstractions $T_{1}$ and $T_{2}$ of $X$ are \emph{equivalent} if there is a bijection $\sigma\colon \supp(T_{1}) \to \supp(T_{2})$ such that $\forall t_{1} \in \supp(T_{1}), x \in \supp(X), \  q_{T_{1}\mid X}(t_{1} \mid x) = q_{T_{2}\mid X}(\sigma(t_{1}) \mid x)$, where $q_{T_{1}\mid X}$, $q_{T_{2}\mid X}$ are the encoders for $T_{1}$ and $T_{2}$.
  We then write $T_{1} \cong T_{2}$.
\end{definition}

One can show that $\cong$ is an equivalence relation (see \Cref{prop:cong-is-equiv-rel}).
We call \emph{abstraction class} an equivalence class of $\cong$, that is, the elements of $\quotient{\Delta}{\cong}$.
In practice, it is unlikely that two v-abstractions will be exactly equivalent.
Thus, it will be useful to have a measure that quantifies the dissimilarity between two v-abstractions, indicating how far they are from being equivalent.
This measure should be minimized when the v-abstractions are equivalent.
We do not present such a metric for the general case (\emph{i.e.} for arbitrary $T_{1}$ and $T_{2}$):
in our experiments, we will want to compare learned encoders with a deterministic encoder corresponding to the ground truth $\underline{T}$ for the case $\beta\rightarrow +\infty$.
Hence, a metric that captures equivalence between a v-abstraction $T$ and a deterministic v-abstraction $\underline{T}$ will suffice.
The variation of information \citep{meilua2003comparing,meilua2007comparing} $\VI(T_{1}; T_{2}) \coloneqq H(T_{1}\mid T_{2}) + H(T_{2} \mid T_{1})$ is such a metric: $\VI(T, \underline{T}) = 0$ exactly when $T$ and $\underline{T}$ are equivalent (see \Cref{prop:vi-zero-means-equiv}).

\section{Experimental Results}
\label{sec:experimental-results}
In our experiments, we aim to minimize a reparameterized version of the CIB, given by $ \LCIBgamma[q_{T\mid X}] \vcentcolon= (1-\gamma)I(X;T) - \gamma I_{c}(Y\mid \dop(T)) $, where $\gamma \in [0, 1]$ is the trade-off parameter.
By a slight abuse of notation, we distinguish this from the original parameterization of the CIB solely by the superscript $\gamma$.
The parameter $\gamma$ has a more intuitive interpretation than $\beta$, representing the fraction of the CIB that the interventional sufficiency term accounts for.
Additionally, it simplifies hyperparameter searches when using optimization algorithms, since the magnitude of the values of $\LCIBgamma$ remains relatively stable with variations in $\gamma$, unlike what happens with $\LCIB$ and $\beta$.
It is straightforward to verify that minimizing the CIB $\LCIB$ is equivalent to minimizing its reparameterization $\LCIBgamma$, provided that $\gamma$ is selected appropriately (see \Cref{prop:wcib-equiv-to-cib}).

In this section, we will demonstrate the application of the CIB Lagrangian by finding its minimum for three problems of increasing complexity and for different values of $\gamma$, using the local search algorithms\footnotemark (Simplex) Projected Gradient Descent (pGD) and (Simplex) Projected Simulated Annealing Gradient Descent (pSAGD) described in \Cref{sec:learning_algo}, and checking whether the results are what we expect.
\footnotetext{The learning rates were chosen based on hyperparameter searches for the $\gamma=1$ cases as well as the typical values of the gradient norm.}

In each experiment, the data is generated by a different SCM, and the minimization algorithm relies solely on the \emph{observational distributions} generated by the SCM and knowledge of a \emph{backdoor set} which can be conditioned on (\Cref{assump:backdoor}).
Notably, the structural assignments are not utilized.

In each experiment \emph{we will check whether}:
\begin{enumerate}[label=(\alph*)]
  \item If $\gamma=1$, the learned v-abstraction $T$ coincides (modulo $\cong$) with the ground truth $\underline{T}$ of that experiment. That is, whether $\VI(T, \underline{T}) = 0$.
  \item If $\gamma=0$, the learned abstraction has $I(X;T) = 0$.
  \item Larger $\gamma$ values correspond to larger (or at least not smaller) values of $I_{c}(Y\mid \dop(T))$. That is, if $\gamma_1 > \gamma_2$, one has that the causal information gain for the encoder learned when $\gamma = \gamma_1$ is larger or equal to that of the encoder learned when $\gamma = \gamma_2$.
\end{enumerate}

If (a), (b), and (c) hold, this provides evidence that the CIB can be used to learn abstractions that maximize control (by setting $\gamma = 1$), maximize compression (by setting $\gamma = 0$), or strike a balance between the two (by setting $\gamma \in (0,1)$).
This is also evidence that the proposed local search algorithms succeed in optimizing the CIB objective.
Note that ground truth is only available for $\gamma = 1$ and $\gamma = 0$.
For $\gamma = 1$, the optimal solution will be clear for the proposed case studies.
For $\gamma = 0$, the solution should be maximally compressive, regardless of causal control over $Y$.
For $\gamma \notin \{0,1\}$, there is no obvious ground truth, but the reasonableness of the results can still be assessed as described in (c).
It is also noteworthy that encoders trained with different $\gamma$ values may achieve the same $I_{c}(Y\mid \dop(T))$, although increasing $\gamma$ should not decrease $I_{c}(Y\mid \dop(T))$.
This property is analogous to the Information Bottleneck (IB) framework, where distinct $\beta$ values often yield the same sufficiency value \citep{kolchinsky2018caveats}.

\paragraph{Learning Odd and Even}
Consider the scenario where the parity of $X$ determines the outcome $Y$ with some uncertainty, parameterized by $u_{Y}$ (see \Cref{fig:odd-and-even}, \Cref{sec:appendix-exp-details}).

To preserve the control that $X$ has over $Y$, an abstraction $T$ of $X$ should reflect the parity of $X$.
Consequently, when $T$ is binary and we aim to maximize the causal control of $T$ over $Y$, $T$ must be equivalent to $\underline{T} = X \mod 2$.
This will serve as the ground truth (modulo $\cong$) for the case $\gamma=1$.

\emph{Results:}
For each $\gamma$ value, we employ an ensemble of $4$ pGD optimizers with learning rates $1.0$ and $0.1$.
As shown in \Cref{subfig:odd-even-uy02}, the experiment satisfies all checks (a), (b), and (c).
For the particularly relevant $\gamma=1$ case, the full-control ground truth was found in $100\%$ of the runs.

\newcommand{\imagescale}{0.24}
\begin{figure*}
  \centering
  \begin{subfigure}[t]{0.32\textwidth}
    \centering
    \includegraphics[trim=50 0 0 0, clip, scale=\imagescale]{./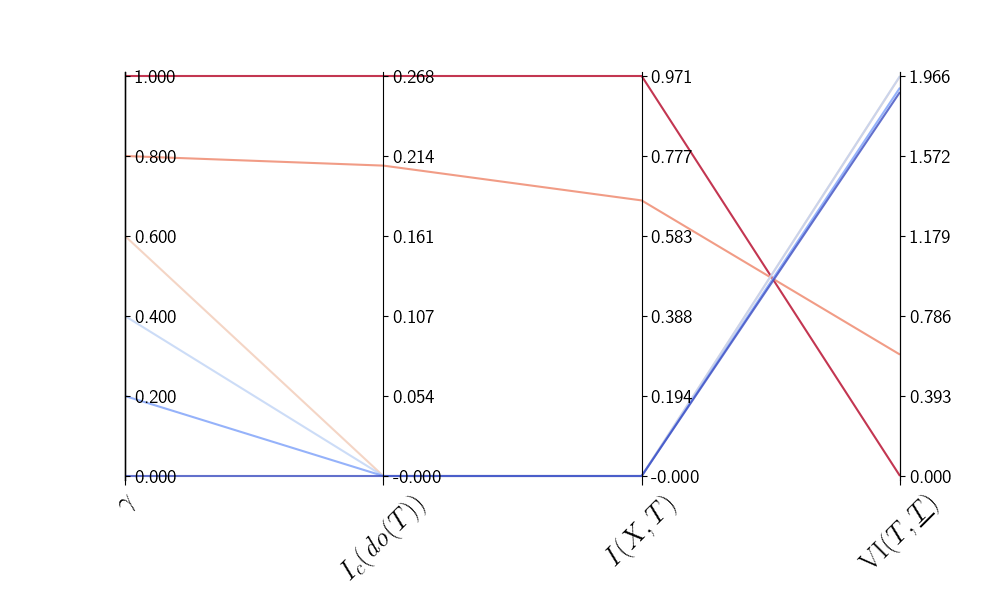}
    \caption{Odd and Even experiment with $u_{Y} = 0.2$. }
    \label{subfig:odd-even-uy02}
  \end{subfigure}
  \hfill
  \begin{subfigure}[t]{0.32\textwidth}
    \centering
    \includegraphics[trim=50 0 0 0, clip, scale=\imagescale]{./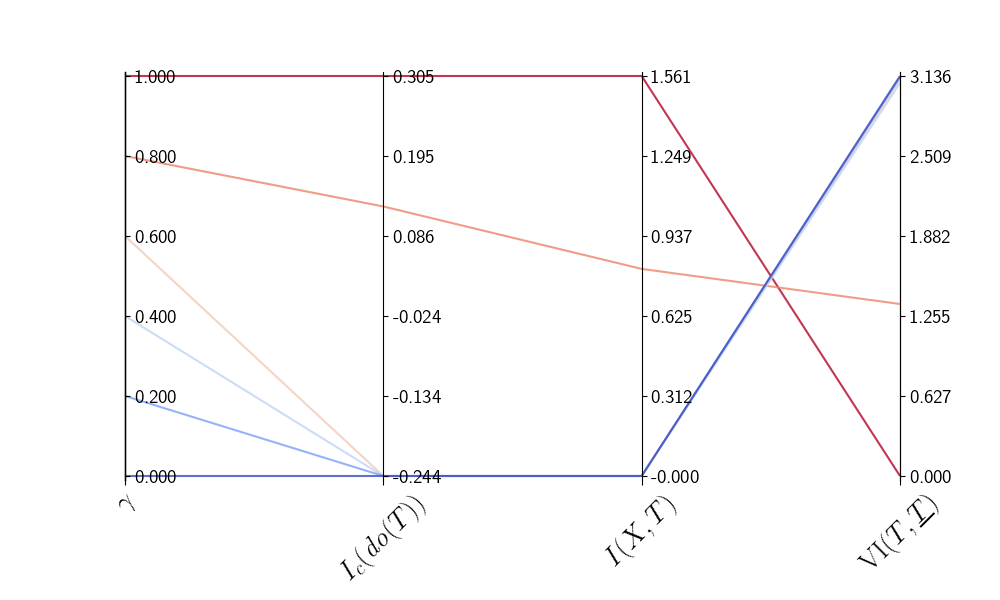}
    \caption{Confounded Addition experiment with $r_{Y} = 0.5$.}
    \label{subfig:conf-add-ry05}
  \end{subfigure}
  \hfill
  \begin{subfigure}[t]{0.32\textwidth}
    \centering
    \includegraphics[trim=50 0 0 0, clip, scale=\imagescale]{./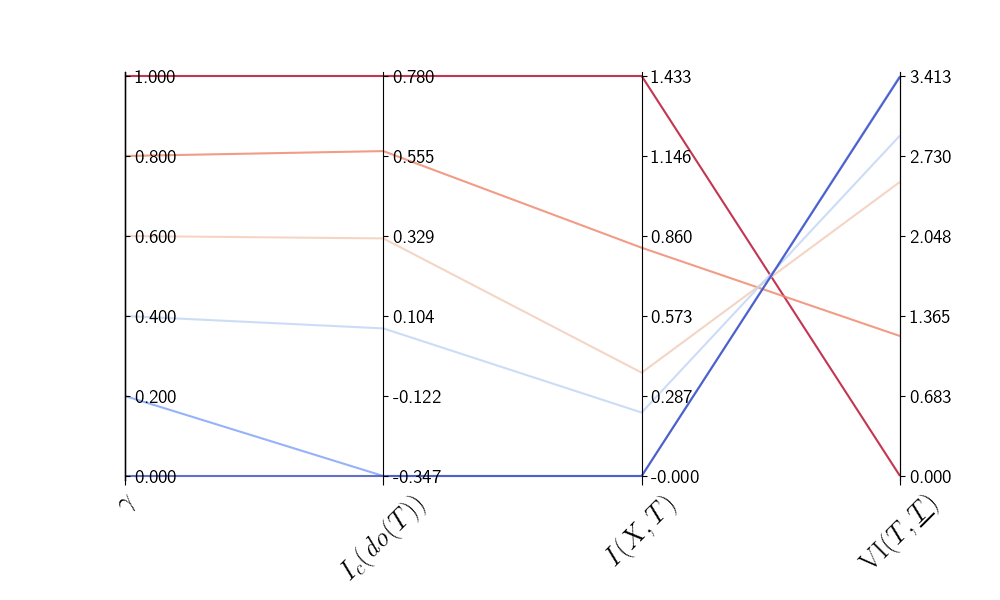}
    \caption{Genetic Mutations experiment with $b_{X_{i}} = 0.3$, $b_{Y} = 0.1$ and $b_{S} = 0.5$.}
    \label{subfig:mutations}
  \end{subfigure}
  \caption{Experimental results for specific noise distributions (other choices were also tested, with similar results --- see \Cref{sec:appendix-other-noises}). Each line corresponds to the abstraction found for the chosen $\gamma \in \{0, 0.2, \ldots, 1.0\}$.
  Maximal $\gamma$ leads to maximal causal control and learning the ground truth, while minimal $\gamma$ leads to maximal compression, as expected. Larger $\gamma$ values result in larger (or at least not smaller) causal control. Notice that some lines overlap.}
  \label{fig:exp-results-fixed-param}
\end{figure*}

This case is straightforward, with $T$ abstracting a single variable and no confounders.
The backdoor criterion is trivially satisfied by the empty set, and the Conditional Information Bottleneck (CIB) reduces to the standard Information Bottleneck (IB).
We now consider a more complex case where controlling for confounding is crucial.

\paragraph{Learning Addition in the Presence of Strong Confounding}
Consider the SCM in \Cref{subfig:conf-add}, which represents a situation where $Y$ is controlled by $X = (X_{1}, X_{2})$ through the sum $X_{1} + X_{2} \in \{0, 1, 2\}$, and $W$ confounds $X_{1}$ and $Y$.
Notice that $W$ satisfies the backdoor criterion relative to $(X,Y)$.
To preserve the control that $X$ has over $Y$, an abstraction $T$ of $X$ should keep the value of $X_{1} + X_{2} \in \{0, 1, 2\}$.
This is because, by construction of the Structural Causal Model (SCM), the sum of $X_{1}$ and $X_{2}$ is the only aspect of $X$ that can be manipulated to affect $Y$.
Therefore, if $T$ is chosen to be a 3-valued variable and one aims to maximize causal control of $T$ over $Y$, it follows that $T$ should be equivalent to $\underline{T} = X_{1} + X_{2}$.
This will be the ground truth abstraction (modulo $\cong$) for the case $\gamma=1$.

\emph{Results:}
For each value of $\gamma$, we employ an ensemble of $6$ pSAGD optimizers with a temperature of $10.0$, learning rates $1.0$ and $10.0$.
As shown in \Cref{subfig:conf-add-ry05}, the experiment satisfies all checks (a), (b), and (c).
This shows in particular that our method successfully deals with the confounding effect of $W$.
For the particularly relevant $\gamma=1$ case, the full-control ground truth was found in $98\%$ of the runs.

\begin{figure*}[h]
  \centering
    \begin{subfigure}[t]{0.49\textwidth}
      \centering
      \scalebox{\diagramscale}{
        \begin{tikzpicture}[mynode/.style={circle,draw=black,fill=white,inner sep=0pt,minimum size=0.8cm}]
          \node[mynode] (x1) at (0,0) { $X_{1}$};
          \node[mynode] (x2) at (0,-1.5) { $X_{2}$};
          \node[mynode] (y) at (2,-0.75) { $Y$};
          \node[mynode] (w) at (1.5,0.75) { $W$};
          \path [draw,->] (x1) edge[-latex] (y);
          \path [draw,->] (x2) edge[-latex] (y);
          \path [draw,->] (x2) edge[-latex] (x1);
          \path [draw,->] (w) edge[-latex] (x1);
          \path [draw,->] (w) edge[-latex] (y);
          \node (assigns) at (6,-0.5)
          {
            $
            \begin{cases}
              Y \vcentcolon= W + (X_{1} + X_{2})N_{Y} \\
              X_{1} \vcentcolon= \min(N_{X_{1}} + \nicefrac{W}{3} \cdot X_{2}, 1) \\
              X_{2} \vcentcolon= N_{X_{2}} \\
              W \vcentcolon= 3N_{W} \\
              N_{X_{i}}, N_{W} \sim \mathrm{Bern}(\nicefrac{1}{2}) \\
              N_{Y} \sim \mathrm{Bern}(r_{Y}), r_{Y} > 0
            \end{cases}
            $
          };
        \end{tikzpicture}
      } %
      \caption{SCM for the Confounded Addition experiment.
        Notice that $Y\in \{0, 1, 2, 3, 4, 5\}$ and $W\in \{0, 3\}$.}
      \label{subfig:conf-add}
    \end{subfigure}
    \hfill
    \begin{subfigure}[t]{0.49\textwidth}
        \centering
        \scalebox{\diagramscale}{
          \begin{tikzpicture}[mynode/.style={circle,draw=black,fill=white,inner sep=0pt,minimum size=0.8cm}]
              \node[mynode] (x1) at (-4.0,0) { $X_{1}$};
              \node[mynode] (x2) at (-3.0,0) { $X_{2}$};
              \node[mynode] (x3) at (-2.0,0) { $X_{3}$};
              \node[mynode] (x4) at (-1,0) { $X_{4}$};
              \node[mynode] (y) at (-3.5,-1.5) { $Y$};
              \node[mynode] (s) at (-1.5,-1.5) { $S$};
              \path [draw,->] (x1) edge[-latex] (y);
              \path [draw,->] (x2) edge[-latex] (y);
              \path [draw,->] (x3) edge[-latex] (y);
              \path [draw,->] (s) edge[-latex] (x3);
              \path [draw,->] (s) edge[-latex] (x4);
              \path [draw,->] (s) edge[-latex] (y);
              \node (assigns) at (4,-0.75)
                {
                  $
                  \begin{cases}
                    S \vcentcolon= N_{S} \\
                    X_{1} \vcentcolon= N_{X_{1}} \\
                    X_{2} \vcentcolon= N_{X_{2}} \\
                    X_{3} \vcentcolon= S \cdot N_{X_{3}} + (1-S)\cdot(1-N_{X_{3}}) \\
                    X_{4} \vcentcolon= S \cdot N_{X_{4}} + (1-S)\cdot(1-N_{X_{4}}) \\
                    Y \vcentcolon= S + X_{1} + X_{2} + (2-X_{3}) \cdot X_{1}\cdot X_{2} + N_{Y} \\
                  \end{cases}
                  $
                };
          \end{tikzpicture}
        } %
        \caption{SCM for the Genetic Mutations experiment.
          All noise variables $N_{V}$ follow a Bernoulli distribution $\mathrm{Bern}(b_{V})$.
        }
        \label{fig:mutations-example}
    \end{subfigure}
    \label{subfig:mutations-example}
    \caption{SCMs for two of the experiments. The structural equations are \emph{not} known to the algorithm. Our experiments show that the OCVAs provide us with information about the causal relations.}
  \label{fig:scms}
\end{figure*}

\paragraph{Interaction between Genetic Mutations}
Consider the SCM depicted in \Cref{fig:mutations-example}, which represents a scenario of genetic mutations in mice like the one described in \Cref{sec:introd}.
Notice that $S$ satisfies the backdoor criterion relative to $(X,Y)$.
By inspecting the structural assignments in \Cref{fig:mutations-example}, we can see that the mutations interact in a complex, non-additive way with respect to the body mass $Y$.
Specifically, individual mutations at $s_{1}$ or $s_{2}$ have an equivalent, relatively small effect on $Y$, while having both mutations simultaneously would have a profound impact, larger than the sum of the impacts of the individual mutations.
Furthermore, having a mutation at $s_{3}$ has no effect on its own, but partially protects against the effect of simultaneous mutations at $s_{1}$ and $s_{2}$.
Finally, $X_{4}$ has no effect on $Y$ whatsoever.
This means in particular that $T$ should be able to distinguish cases where both $s_{1}$ and $s_{2}$ are mutated but $s_{3}$ is not, and where all three are mutated.

This setting is characterized by an increased number of variables, confounding, and complex epistatic interactions between variables, which the learned encoder must capture.
Furthermore, the encoder should recognize that the maximal control solution does not need to account for one of the variables in $X$ (specifically, $X_{4}$).
Denote by $\theta(X)$ the sum $X_{1} + X_{2} + (2-X_{3}) \cdot X_{1}\cdot X_{2}$ of all the terms in the structural assignment of $Y$ which involve $X$.
This expression represents the aspect of $Y$ that $X$ can influence, and thus it is the information that $T$ should capture to achieve maximal control.
It follows that a 4-valued v-abstraction learned with $\gamma=1$ case should be equivalent to the v-abstraction $\underline{T}$ with encoder $q_{\underline{T}\mid X}$ given by $q_{\underline{T}\mid X}(t\mid x_{1}, x_{2}, x_{3}, x_{4}) = \delta_{t, \theta(x_{1}, x_{2},x_{3},x_{4})}$.
This will be the ground truth abstraction (modulo $\cong$) for the case $\gamma=1$.

\emph{Results:}
For each value of $\gamma$, we employ an ensemble of $12$ pSAGD optimizers with a temperature of $10.0$, learning rates $10^{0}$, $10^{1}$, $10^{2}$ and $10^{4}$.
As shown in \Cref{subfig:mutations}, the experiment satisfies all checks (a), (b), and (c).
This shows that our method can capture complex interaction effects between the variables, while simultaneously dealing with the confounding effect of $S$.
For the particularly relevant $\gamma=1$ case, the full-control ground truth was found in $95\%$ of the runs.

We emphasize that the full-control OCVAs provide information about the causal relationships which is not clear from knowledge of the joint distribution and backdoor set (which are the only inputs of the method).
For instance, in the Genetic Mutations experiment, looking at the OCVA learned for $\gamma=1$ (the ground truth $\underline{T}$) reveals, for example, that a mutation on $s_{3}$ has a protective effect which is activated (only) when both $s_{1}$ and $s_{2}$ are mutated.
This is a complex causal interaction between the variables which would not be clear from the input data.
Knowledge of the OCVA can also help with intervention selection tasks: in this example, one would not intervene on $X_{3}$ in a case where one of the $s_{1}, s_{2}$ is not mutated.
Even in cases where the structural equations are known, they are often complex and hard to interpret directly, in which case learning OCVAs can again help with interpretability and intervention selection.

We also ran the IB method on the same datasets to compare with the CIB method.
As expected, the IB method failed to learn the (full-control) ground-truth when confounding was present.
A detailed analysis can be found in \Cref{sec:appendix-IB}.

\section{Related Work}
\label{sec:related-work}
There has been one other line of work investigating a problem related to our search for an OCVA.
Namely, in \citet{chalupka2017overview,chalupka2014visual,chalupka2016multi,chalupka2016unsupervised}), the authors search for ``causal macrovariables'' of higher-dimensional ``microvariables'', described as coarse representations of the microvariable $X$ which preserve the causal relation between $X$ and $Y$\footnotemark.
\footnotetext{In general, they search for abstractions not only of $X$ but also of $Y$. We are only interested in how they construct abstractions of $X$.}
Their definition of macrovariable is distinct from our definition of OCVA.
Namely, it is not based on information-theoretical concepts such as compression and sufficiency, but on clustering together the values of $X$ \emph{resulting in the same post-intervention distributions} over $Y$.
Each such cluster is called a ``causal class''.
Our method \emph{subsumes} theirs, in the sense that causal macrovariables coincide with the OCVAs in the full-control cases.
To be precise, for the full-control ($\gamma=1$) case and $\abs{R_T}$ equal to the number $N_{C}$ of causal classes, the CIB method learns a deterministic encoder mapping all $x$ with the same post-intervention distribution $p_Y^{\dop(X=x)}$ to the same $t$ (\Cref{prop:cib_and_macrovariables}, \Cref{sec:appendix-cib-and-macrovariables}).
Hence, the full-control OCVAs with $\abs{R_{T}}=N_{C}$ are exactly the causal macrovariables from \citet{chalupka2016unsupervised}.
\Citet{holtgen2021encoding,aruna2023semi}
tried to address the problem of finding causal macrovariables using the standard information bottleneck framework.
However, their method does not account for causality, being limited to scenarios without confounding due to its reliance on the standard IB, which, as we checked experimentally, can fail in the presence of confounding.

The study of abstractions in causality has been conducted from yet another point of view.
Namely, there have been efforts to formalize when exactly a given causal model can be seen as a causally consistent abstraction of another one \citep{rubenstein2017causal,beckers2019abstracting,beckers2020approximate}.
The core idea is that intervening on a variable in the low-level model and then transitioning to the high-level model should yield the same distribution over the model variables as first transitioning to the high-level model and then intervening.
Although these works do not provide methods for constructing causal model abstractions, recent efforts have focused on learning such high-level models \citet{xia2024neural,zennaro2023jointly,felekis2024causal}.
In contrast, our method does not learn an abstraction of an SCM, but a an abstraction of a (set of) variable(s) $X$ which is causally relevant.

Finally, we note that the learning task in the causal representation learning literature (see \emph{e.g.} \citet{scholkopf2021towards,brehmer2022weakly,bonifati2022time2feat,ahuja2023interventional,lippe2023biscuit,bing2023invariance,von2024nonparametric,bing2024identifying,buchholz2024learning}) may appear to be related to to our task of learning OCVAs.
However, this is not the case.
In the causal representation learning task, the goal is to infer latent causal variables (and their causal relations) which may have generated some given low-level data.
In contrast, our method aims to construct a an abstraction $T$ of a variable $X$ which keeps causal control over a chosen target $Y$ while compressing $X$.
In particular, we do not make any assumptions of existence of latent variables from which data is generated.

\section{Conclusion and Future Work}
\label{sec:conclusion}

We extended the notion of optimal variable abstraction to the causal setting, resulting in an axiomatic characterization of optimal causal variable abstractions.
Just as the information bottleneck (IB) method can be used to learn optimal variable abstractions, so can the causal information bottleneck (CIB) method introduced in this paper be used to learn optimal causal variable abstractions.

The CIB, which depends on the interventional distributions $p(y\mid \dop(t))$, needs to be computed during the learning procedure, which consists of solving a constrained minimization problem.
The exact expression for the CIB depends on the causal structure of the system under study.
We focused on cases where there is a set $\mathbf{Z}$ satisfying the backdoor criterion relative to $(X,Y)$ among the observed variables.
This allowed us to derive a backdoor adjustment formula for $p(y\mid \dop(t))$, and thus successfully apply a minimization algorithm to minimize the CIB.
Specifically, we introduced a local search algorithm, referred to as projected simulated annealing gradient descent (pSAGD), which integrates simulated annealing and gradient descent techniques with a projection operator to maintain constraint satisfaction throughout the minimization process.
In order to compare different variable abstractions (v-abstractions) learned by our algorithm, we introduced a notion of equivalence of v-abstractions, which partitions v-abstractions into abstraction classes, and showed that the variation of information can be used to assess whether two v-abstractions are equivalent.
We experimentally validated that the learned v-abstractions in three toy models of increasing complexity align with our expectations, and that the standard information bottleneck method fails to the the same.

Future research directions include exploring alternative methods for incorporating causality into the information bottleneck framework, such as focusing on causal properties other than causal control, like proportionality \citep{pocheville2015comparing}.
Another area worth investigating relates to fine-tuning the trade-off between compression and interventional sufficiency.
\Citet{kolchinsky2018caveats} highlight that, in the context of the Information Bottleneck (IB), different values of $\beta$ can often result in the same sufficiency.
Similarly, in our experiments we observed that different values of $\gamma$ (and thus $\beta$) often produced the same interventional sufficiency values.
Future work could explore strategies similar to those used by \citet{kolchinsky2018caveats} to address this.
Additionally, another natural next step would be to adapt the causal information bottleneck (CIB) method to continuous variables, for example by using variational autoencoders \citep{kingma2013variational} to minimize the CIB Lagrangian, as was previously done for the standard IB \citep{alemi2016deep}.
One can also explore the relationship between our v-abstraction learning method and the framework of causal abstractions, similarly to how \citet{beckers2020approximate} connect the latter with the approach from \citet{chalupka2017overview}.
Finally, while the local search algorithms used in our experiments (pGD and pSAGD) sufficed to demonstrate the applicability of the CIB method for smaller datasets, scaling to larger or more complex cases may require modifications to deal with local minima. Developing such algorithms for the CIB method is left to future work.

\section*{Acknowledgments}
\label{sec:acknowledgments}
This publication is part of the CAUSES project (KIVI.2019.004) of the research programme Responsible Use of Artificial Intelligence which is financed by the Dutch Research Council (NWO) and ProRail.
We thank Yorgos Felekis for the useful discussions about related work.

\bibliography{cib_paper.bib}

\newpage

\onecolumn
\appendix

\title{Appendix}
\maketitle

\section{More Preliminaries}
\label{sec:appendix-more-prelim}

\subsection{Entropy and Mutual Information}
\label{sec:appendix-entr-mutu-inform}
In this subsection we will state the definitions of entropy, conditional
entropy and mutual information. In the interest of space, we will not try to
motivate these definitions.
For more, see \citet{thomas2006information}.
\begin{definition}[Entropy and Cond. Entropy
  \citep{thomas2006information}]
  \label[definition]{def:entr}
  Let $X$ be a discrete random variable with range $R_{X}$ and $p$ be a
  probability distribution for $X$. The \emph{entropy of $X$ w.r.t. the
    distribution $p$} is\footnotemark \footnotetext{In this article, $\log$
    denotes the logarithm to the base $2$.}
  \begin{equation}
    \label{eq:entr}
    H_{X \sim p}(X) \vcentcolon= -\sum_{x\in R_{X}} p(x) \log p(x).
  \end{equation}
  Entropy is measured in $\mathrm{bit}$.
  If the context suggests a canonical probability distribution for $X$, one can write $H(X)$ and refers to it simply as the \emph{entropy of $X$}. \\
  The \emph{conditional entropy} $H(Y\mid X)$ of $Y$ conditioned on $X$ is the
  expected value w.r.t. $p_{X}$ of the entropy
  $H(Y \mid X=x)\vcentcolon=H_{Y\sim p_{Y\mid
      X=x}}(Y)$:%
  \begin{equation}
    \label{eq:cond-entr}
    H(Y\mid X) \vcentcolon= \E_{x\sim p_{X}} \left[ H(Y \mid X=x) \right].
  \end{equation}
\end{definition}
This means that the conditional entropy $H(Y \mid X)$ is the entropy of $H(Y)$
that remains on average if one conditions on $X$.

There are two common equivalent ways to define mutual information (often called
information gain).

\begin{definition}[Mutual Information and Cond. Mutual Information
  \citep{thomas2006information}]
  \label[definition]{def:mutual-information}
  Let $X$ and $Y$ be discrete random variables with ranges $R_{X}$ and $R_{Y}$
  and distributions $p_{X}$ and $p_{Y}$, respectively. The \emph{mutual
    information} between $X$ and $Y$ is
  \begin{equation}
    \label{eq:mi-independence-form}
    I(X; Y) := \!\!\!\! \sum_{x, y \in R_{X} \times R_{Y}} \!\!\!\! p_{X, Y}(x, y) \log \frac{p_{X, Y}(x, y)}{p_{X}(x) p_{Y}(y)}.
  \end{equation}
  Or equivalently:
  \begin{align}
    \begin{split}
      \label{eq:mi-entr-form}
      I(X; Y) &:= H(Y) - H(Y \mid X) \\
      &= H(X) - H(X \mid Y).
    \end{split}
  \end{align}
  Let $Z$ be another discrete random variable. The \emph{conditional mutual
    information} between $X$ and $Y$ conditioned on $Z$ is:
  \begin{align}
    \begin{split}
      \label{eq:cond-mut-info}
      I(X; Y \mid Z) &:= H(Y \mid Z) - H(Y \mid X, Z) \\
                    &= H(X \mid Z) - H(X \mid Y, Z).
    \end{split}
  \end{align}
\end{definition}

The view of mutual information as entropy reduction from \Cref{eq:mi-entr-form}
is the starting point for the definition of causal information gain.

\subsection{More on Causal Entropy and Causal Information Gain}
\label{sec:appendix-caus-entr-caus}

In this section, we will define causal entropy and causal information gain.
The latter will be an essential component of our method.
See \citet{simoes2023causal} for a thorough discussion about these concepts.
Let $X$ and $Y$ be endogenous variables of an SCM $\C$.
The causal entropy of $Y$ for $X$ is the entropy of $Y$ that remains, on
average, after one atomically intervenes on $X$.
Its definition is analogous to that of conditional entropy (see \Cref{def:entr}).
Concretely, causal entropy is the average uncertainty one has about $Y$ if one sets $X$ to $x$ with probability $p_{X^{\star}}(x)$, where $p_{X^{\star}}$ specifies the distribution over interventions.
\begin{definition}[Causal Entropy, $H_{c}$ \citep{simoes2023causal}]
  Let $Y$, $X$ and $X^{\star}$ be random variables such that $X$ and $X^{\star}$ have the same
  range and $X^{\star}$ is independent of all variables in $\cC$. We say that $X^{\star}$ is
  an \emph{intervention protocol} for $X$.
  The \emph{causal entropy} $H_{c}(Y\mid \dop(X \sim X^{\star}))$ of $Y$ for $X$ given the
  intervention protocol $X^{\star}$ is the expected value w.r.t. $p_{X^{\star}}$ of
  the entropy
  $H(Y \mid \dop(X = x)) \vcentcolon= H_{Y \sim p_{Y}^{\dop(X=x)}}(Y)$ of the
  interventional distribution $p_{Y}^{\dop(X=x)}$. That is:
  \begin{equation}
    \label{eq:caus-cond-entr}
    H_{c}(Y\mid \dop(X \sim  X^{\star})) \vcentcolon= \E_{x\sim p_{X^{\star}}} \left[ H(Y \mid \dop(X=x)) \right].
  \end{equation}
\end{definition}

Causal information gain extends mutual information/information gain to the
causal context. While mutual information between two variables $X$ and $Y$ is
the average reduction in uncertainty about $Y$ if one observes the value of $X$
(see \Cref{eq:mi-entr-form}), the causal information gain of $Y$ for $X$ is
the average decrease in the entropy of $Y$ after one atomically intervenes on
$X$ (following an intervention protocol $X^{\star}$).
\begin{definition}[Causal Information Gain, $I_{c}$ \citep{simoes2023causal}]
  \label[definition]{def:Ic}
  Let $Y$, $X$ and $X^{\star}$ be random variables such that $X^{\star}$ is an intervention
  protocol for $X$. The \emph{causal information gain}
  $I_{c}(Y\mid \dop(X \sim X^{\star}))$ of $Y$ for $X$ given the intervention protocol
  $X^{\star}$ is the difference between the entropy of $Y$ w.r.t. its prior and the
  causal entropy of $Y$ for $X$ given the intervention protocol $X^{\star}$. That is:
  \begin{equation}
    \label{eq:caus-cond-entr}
    I_{c}(Y \mid \dop(X \sim X^{\star})) \vcentcolon= H(Y) - H_{c}(Y\mid \dop(X \sim  X^{\star})).
  \end{equation}
\end{definition}
The causal information gain of $Y$ for $X$ was proposed in \citet{simoes2023causal} as a measure of the ``(causal) control that variable $X$ has over the variable $Y$'', that is, the reduction of uncertainty about $Y$ that results from intervening on $X$.
It is noteworthy that $I_{c}(Y \mid \dop(X))$ can be negative, in contrast with mutual information.

As described in \Cref{sec:interv-abst}, we choose a uniform prior over the interventions on $X$; that is, a uniform protocol $p_{X^{\star}} = p^{\star}$ over the low-level variables $X$.
This induces an intervention protocol $p^{\star}(t) = \sum_{x} q(t\mid x) p^{\star}(x)$ for $T$.
To simplify notation, we omit the protocol and write simply $I_{c}(Y\mid \dop (X))$ and $I_{c}(Y\mid \dop (T))$.

\subsection{More on Structural Causal Models}
\label{sec:struct-caus-models}

One can model the causal structure of a system by means of a ``structural causal model'', which can be seen as a Bayesian network \citep{koller2009probabilistic} whose graph $G$ has a causal interpretation and each conditional probability distribution (CPD) $P(X_{i} \mid \Pa_{X_{i}})$ of the Bayesian network stems from a deterministic function $f_{X_{i}}$ (called ``structural assignment'') of the parents of $X_{i}$.
In this context, it is common to separate the parent-less random variables (which are called ``exogenous'' or ``noise'' variables) from the rest (called ``endogenous'' variables).
Only the endogenous variables are represented in the structural causal model graph.
As is commonly done \citep{peters2017elements}, we assume that the noise variables are jointly independent and that exactly one noise variable $N_{X_{i}}$ appears as an argument in the structural assignment $f_{X_{i}}$ of $X_{i}$.
In full rigor\footnotemark \citep{peters2017elements}:

\footnotetext{We slightly rephrase the definition provided in \citet{peters2017elements} for our purposes. \label{fn:def}}

\begin{definition}[Structural Causal Model]
  \label[definition]{def:scm}
  Let $X$ be a random variable with range $R_{X}$ and $\mathbf{W}$ a random vector with range $R_{\mathbf{W}}$.
  A \emph{structural assignment for $X$ from $\mathbf{W}$} is a function $f_{X}\colon R_{\mathbf{W}} \to R_{X}$.
  A \emph{structural causal model (SCM)} $\mathfrak{C} = (\mathbf{X}, \mathbf{N}, S, p_{\mathbf{N}})$ consists of:
  \begin{enumerate}
    \item A random vector $\mathbf{X} = (X_{1}, \ldots, X_{n})$ whose variables we call \emph{endogenous}.
    \item A random vector $\mathbf{N} = (N_{X_{1}}, \ldots, N_{X_{n}})$ whose variables we call \emph{exogenous} or \emph{noise}.
    \item A set $S$ of $n$ structural assignments $f_{X_{i}}$ for $X_{i}$ from ($\Pa_{X_{i}}, N_{X_{i}}$), where $\Pa_{X_{i}} \subseteq \mathbf{X}$ are called \emph{parents} of $X_{i}$.
      The \emph{causal graph} $G^{\mathfrak{C}}\vcentcolon=(\mathbf{X}, E)$ of $\mathfrak{C}$ has as its edge set $E = \{(P, X_{i}) : X_{i} \in \mathbf{X},\  P\in \Pa_{X_{i}}\}$.
      The $\Pa_{X_{i}}$ must be such that the $G^{\mathfrak{C}}$ is a directed acyclic graph (DAG).
    \item A jointly independent probability distribution $p_{\mathbf{N}}$ over the noise variables. We call it simply the \emph{noise distribution}.
  \end{enumerate}
\end{definition}

We denote by $\C(\mathbf{X})$ the set of SCMs with vector of endogenous variables $\mathbf{X}$.
In general, we allow $\mathbf{X}$ to consist of both observed variables $\mathbf{O}$ and hidden variables $\mathbf{H}$, so that $\mathbf{X} = \mathbf{O} \cup \mathbf{H}$.
Notice that for a given SCM the noise variables have a known distribution $p_{\mathbf{N}}$ and the endogenous variables can be written as functions of the noise variables.
Therefore the distributions of the endogenous variables are themselves determined if one fixes the SCM.
This brings us to the notion of the entailed distribution\footref{fn:def} \citep{peters2017elements}:

\begin{definition}[Entailed distribution]
  Let $\mathfrak{C} = (\mathbf{X}, \mathbf{N}, S, p_{\mathbf{N}})$ be an SCM. Its \emph{entailed distribution} $p^{\mathfrak{C}}_{\mathbf{X}}$  is the unique joint distribution over $\mathbf{X}$ such that $\forall X_{i} \in \mathbf{X},\ X_{i} = f_{X_{i}}(\Pa_{X_{i}}, N_{X_{i}})$.
  It is often simply denoted by $p^{\cC}$.
  Let $\mathbf{x}_{-i}\vcentcolon= (x_{1}, \ldots, x_{i-1}, x_{i+1}, \ldots, x_{n})$.
  For a given $X_{i} \in \mathbf{X}$, the marginalized distribution $p^{\cC}_{X_{i}}$ given by $p^{\cC}_{X_{i}}(x_{i}) = \sum_{\mathbf{x}_{-i}} p^{\cC}_{\mathbf{X}}(\mathbf{x})$ is also referred to as \emph{entailed distribution (of $X_{i}$)}.
\end{definition}

An SCM allows us to model interventions on the system.
The idea is that an SCM represents how the values of the random variables are generated, and by intervening on a variable we are effectively changing its generating process.
Thus intervening on a variable can be modeled by modifying the structural assignment of said variable, resulting in a new SCM differing from the original only in the structural assignment of the intervened variable, and possibly introducing a new noise variable for it, in place of the old one.
Naturally, the new SCM will have an entailed distribution which is in general different from the distribution entailed by the original SCM.

The most common type of interventions are the so-called ``atomic interventions'', where one sets a variable to a chosen value, effectively replacing the distribution of the intervened variable with a point mass distribution.
In particular, this means that the intervened variable has no parents after the intervention.
This is the only type of intervention that we will need to consider in this work.
Formally\footref{fn:def} \cite{peters2017elements}:
\begin{definition}[Atomic intervention]
  Let $\cC = (\mathbf{X}, \mathbf{N}, S, p_{\mathbf{N}})$ be an SCM, $X_{i} \in \mathbf{X}$ and $x\in R_{X_{i}}$.
  The \emph{atomic intervention} $\dop(X_{i}=x)$ is the function $\cC(\mathbf{X}) \to \cC(\mathbf{X})$ given by $\cC \mapsto \cC^{\dop(X_{i} = x)}$, where $\cC^{\dop(X_{i} = x)}$ is the SCM that differs from $\cC$ only in that the structural assignment $f_{X_{i}}(\Pa_{X_{i}}, N_{X_{i}})$ is replaced by the structural assignment $\tilde{f}_{X_{i}}(\tilde{N}_{X_{i}}) = \tilde{N}_{X_{i}}$, where $\tilde{N}_{X_{i}}$ is a random variable with range $R_{X_{i}}$ and\footnotemark $p_{\tilde{N}_{X_{i}}}(x_{i}) = \mathbf{1}_{x}(x_{i})$ for all $x_{i} \in R_{X_{i}}$.
\footnotetext{We denote by $\mathbf{1}_{x}$ the indicator function of $x$, so that $\mathbf{1}_{x}(x_{i}) =
  \begin{cases}
    1,&  x_{i} = x \\
    0,& \mathrm{otherwise}
  \end{cases}
  $. }
  Such SCM is called the \emph{post-atomic-intervention SCM}.
  One says that the variable $X_{i}$ was \emph{(atomically) intervened on}.
  The distribution $p^{\dop(X_{i} = x)} \vcentcolon= p^{\cC^{\dop(X_{i}=x)}}$ entailed by $\cC^{\dop(X_{i}= x)}$ is called the \emph{post-intervention distribution (w.r.t. the atomic intervention $\dop(X_{i} = x)$ on $\cC$)}.
\end{definition}

\section{Supplementary Material on the CIB}

\subsection{Lagrange Multipliers and the CIB}
\begin{remark}[Distinctions from classical Lagrange multipliers]
  The minimization problem in \Cref{eq:our_min_problem} involves both equality and inequality constraints.
  To tackle this problem using the method of Lagrange multipliers \citep{nocedal2006numerical} directly, we would need to construct a Lagrangian of the form
  $\mathcal{L}(q_{T\mid X}, \beta, (\lambda_{x})_{x}, (\mu_{x,t})_{x,t}) = I(X;T) - \beta (I_{c}(Y\mid\dop(T)) - D) - \sum_{x}\lambda_xg_{x}((q_{T \mid X=x})_{x}) - \sum_{x, t}\mu_{x,t}h_{x,t}(q_{T=t \mid X=x})$,
  where the $g_{x}$ are the restriction functions ensuring that all the conditional distributions $q_{T\mid X=x}$ are normalized and the $h_{x,t}$ are the inequality restriction functions ensuring non-negativity of the conditional distributions.
  Hence the last two terms of $\mathcal{L}$, along with the appropriate Karush-Kuhn-Tucker (KKT) conditions, would guarantee that the conditional distributions $q_{T\mid X=x}$ belong to the simplex $\Delta^{\abs{R_{T}} - 1}$.
  However, finding the stationary points of the Lagrangian with respect to all its arguments, \emph{i.e.}, those where $\nabla_{q_{T\mid X}, (\lambda_{x})_{x}, (\mu_{x,t})}\mathcal{L} = 0$, would be a formidable task.
  Instead, we follow the approach of \citet{tishby2000information,strouse2017deterministic} and impose the simplex constraint separately, outside of the Lagrangian multipliers method, leaving us only with the sufficiency constraint.
  Furthermore, in contrast with the classic method of Lagrangian multipliers, the multiplier $\beta$ is fixed, so that $D$ is not chosen directly, but only indirectly through the choice of $\beta$.
\end{remark}

\subsection{Weighted Causal Information Bottleneck Lagrangian}

In this section, it will be useful to distinguish $\LCIBgamma$ from the original CIB $\LCIB$.
We call the former the \emph{weighted causal information bottleneck (wCIB) Lagrangian} and denote it $\wLCIB$.

\begin{proposition}
  \label{prop:wcib-equiv-to-cib}
  Let $\beta \in \R^{+}$ and $\gamma = \frac{\beta}{1+\beta}$.
  Then the minimizers of $\LCIB[q_{T\mid X}]$ are the same as those of $\wLCIB[q_{T\mid X}]$.
\end{proposition}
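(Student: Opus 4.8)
The plan is to observe that $\wLCIB$ is simply a strictly positive scalar multiple of $\LCIB$, after which the claim is immediate: rescaling an objective by a positive constant (while keeping the feasible set fixed) does not change its set of minimizers. So the whole argument reduces to one algebraic substitution plus a remark about the feasibility constraints.

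First I would plug $\gamma = \frac{\beta}{1+\beta}$ into the definition $\wLCIB[q_{T\mid X}] = (1-\gamma) I(X;T) - \gamma I_{c}(Y\mid\dop(T))$. Since $\beta \in \R^{+}$ we have $1+\beta > 0$ and $1-\gamma = \frac{1}{1+\beta}$, hence
\begin{equation}
\begin{split}
  \wLCIB[q_{T\mid X}] &= \frac{1}{1+\beta} I(X;T) - \frac{\beta}{1+\beta} I_{c}(Y\mid\dop(T)) \\
  &= \frac{1}{1+\beta}\bigl( I(X;T) - \beta\, I_{c}(Y\mid\dop(T)) \bigr) \\
  &= \frac{1}{1+\beta}\,\LCIB[q_{T\mid X}].
\end{split}
\end{equation}

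Next I would note that $\LCIB$ and $\wLCIB$ are minimized over exactly the same domain --- the encoders $q_{T\mid X}$ subject to the simplex constraint $q_{T\mid X} \in \Delta$ --- since neither $\Delta$ nor the domain of the functional depends on $\beta$ or $\gamma$. Because $\tfrac{1}{1+\beta}$ is a constant that is strictly positive and independent of $q_{T\mid X}$, for any feasible encoders $q, q'$ we have $\wLCIB[q] \le \wLCIB[q']$ if and only if $\LCIB[q] \le \LCIB[q']$; consequently $q$ is a (global) minimizer of $\wLCIB$ iff it is a (global) minimizer of $\LCIB$, which is the claim. (The same equivalence holds verbatim for local minimizers and for stationary points, which is the property actually exploited by the gradient-based optimizers.)

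There is essentially no obstacle here: the only things to check are that $1+\beta \neq 0$, which is guaranteed by the standing hypothesis $\beta \in \R^{+}$, and that the two objectives share the same feasible set, which is clear. If desired, one may add the complementary remark that $\beta \mapsto \gamma = \frac{\beta}{1+\beta}$ is a bijection from $\R^{+}$ onto $(0,1)$ with inverse $\gamma \mapsto \beta = \frac{\gamma}{1-\gamma}$, so that the correspondence between the $\beta$- and $\gamma$-parameterizations is exact on the relevant ranges.
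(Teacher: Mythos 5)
Your proof is correct and follows essentially the same route as the paper's: substitute $\gamma = \frac{\beta}{1+\beta}$ to show $\wLCIB = \frac{1}{1+\beta}\LCIB$, then conclude that a strictly positive rescaling over the same feasible set preserves the minimizers. The additional remarks about the shared constraint set and the bijection between the parameterizations are fine but not needed beyond what the paper already does.
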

\begin{proof} %
  \begin{equation}
    \begin{split}
      \wLCIB &= (1-\gamma)I(Y;X) - \gamma I_{c}(Y\mid \dop(T)) \\
             &= \frac{1}{1+\beta}I(Y;X) - \frac{\beta}{1+\beta} I_{c}(Y\mid \dop(T)) \\
             &= \frac{1}{1+\beta} \LCIB.
    \end{split}
  \end{equation}
  Therefore, the wCIB with the chosen $\gamma$ is simply a rescaling of the CIB with scaling factor $\frac{1}{1+\beta} \in \left(0, 1\right]$.
  Since this factor is always positive, it follows that $\wLCIB$ and $\LCIB$ attain their minima at the same points.
\end{proof}

Notice that $\beta = \frac{\gamma}{1 - \gamma}$ for $\gamma < 1$, and that $\beta \rightarrow +\infty$ as $\gamma \rightarrow 1$.
In case maximal causal control of $T$ is desired without consideration of compression, we can use the wCIB with $\gamma = 1$, in which case we formally set $\beta = +\infty$.

\subsection{CIB and Causal Macrovariables}
\label{sec:appendix-cib-and-macrovariables}
In this section, we show that the CIB method learns, in the full-control ($\gamma=1$) case, the causal macrovariables from \citet{chalupka2016unsupervised,chalupka2017overview}.

\begin{definition}[Causal Classes, Partitions and Macrovariables \citep{chalupka2016unsupervised}]
  We define the following additional notation and terminology:
  \begin{enumerate}
    \item A \emph{causal class} of $X$ is an equivalence class of the equivalence relation $\sim$ on $R_{X}$ defined by $x_{1} \sim x_{2} \iff P_{Y}^{\dop(X=x_{1})} = P_{Y}^{\dop(X=x_{2})}$. A causal class is also called a \emph{(causal) macrovariable}.
    \item The \emph{causal partition} of $R_{X}$ is the partition induced by $\sim$.
  \end{enumerate}
\end{definition}

\begin{proposition}[CIB and Causal Macrovariables]
  \label{prop:cib_and_macrovariables}
  If the causal partition has $n$ causal classes, then the CIB Lagrangian with $\gamma=1$ and at least $n$ states for $T$ will be optimized by an encoder that deterministically maps each causal class to a unique value of $T$.
  If $T$ has exactly $n$ states, then the converse also holds, \emph{i.e.}, all optima correspond to the causal partition.
\end{proposition}

\begin{proof}
  For $\gamma=1$, optimizing \Cref{eq:cib_lag} is equivalent to minimizing $f_{obj}[q] = \E_{t \sim p_{T^*}}\left[H[p_{Y}^{\dop(T=t)}]\right]$. Here $p_{Y}^{\dop(T=t)}$ is defined by \Cref{eq:abst-intervention} as $\sum_x p^*(x \mid t) p_{Y}^{\dop(X=x)}$.

  We will use that $H$ is (strictly) concave, that is, for non-negative weights $w_x$ that sum to $1$, we have
  $H [ \sum_x w_x p_Y^{\dop(x)} ] \geq \sum_x w_x H [ p_Y^{\dop(x)} ]$,
  with strict inequality if there exist $x, x’$ with $w_x>0, w_{x’}>0$, and $p_Y^{\dop(x)} \neq p_Y^{\dop(x’)}$.

  Let $q(C\mid x)$ be the encoder corresponding to the causal partition.
  Write $C$ for a causal class in the causal partition.
  Then, $q(C\mid x) = \mathbf{1}_{x\in C}$.
  Note that $p^*(C) = \sum_{x} q(C\mid x) p^*(x) = \sum_{x \in C} p^*(x)$.
  This encoder achieves

  \begin{equation}
    \begin{split}
      f_{obj}[q(C\mid x)] &= \sum_C p^*(C) H [ \sum_x p^*(x \mid C) p_Y^{\dop(x)} ] \\
      &= \sum_C p^*(C) H [ \sum_x \frac{ q(C\mid x) p^*(x) }{ \sum_{x’} q(C\mid x’) p^*(x’) } p_Y^{\dop(x)} ] \\
      &= \sum_C p^*(C) H [ \sum_{x \in C} \frac{ p^*(x) }{ \sum_{x’ \in C} p^*(x’) } p_Y^{\dop(x)} ] \\
      &= \sum_C p^*(C) H [ p_Y^{\dop(x)} ] \text{(where } x \text{ is any }x \text{ in } C\text{)} \\
      &= \sum_x p^*(x) H [ p_Y^{\dop(x)} ],
  \end{split}
\label{eq:fobj_partition}
\end{equation}
where on the second step we used \Cref{eq:int-decoder}, on the third step we used that $p_{Y}^{\dop(x)}$ is the same for every $x$, and on the last step we used that the causal classes form a partition of $R_{X}$, so that $\sum_{C}\sum_{x\in C} = \sum_{x}$.
Now, consider an arbitrary v-abstraction $T$.
By using $T$ instead of the causal partition, the value of the objective function that we get is
$f_{obj}[q(t\mid x)] = \sum_t p^*(t) H [ \sum_x p^*(x \mid t) p_Y^{\dop(x)} ]$.
On the other hand, \Cref{eq:fobj_partition} can be written $f_{obj}[q(C\mid x)] = \sum_t p^*(t) \sum_x p^*(x \mid t) H [ p_Y^{\dop(x)} ]$.
By concavity of $H$ we see that every term of the sum in the expression for $f_{obj}[q(t\mid x)]$ is $\leq$ than the corresponding term (\emph{i.e.} for the same $t$) of the sum in the expression for $f_{obj}[q(C\mid x)]$.
Hence the causal partition’s objective is $\leq$ than $T$’s objective.
If $T$ has exactly $n$ states but $T$ does not correspond to the causal partition, then there must be $t$, $x$, $x’$ with $p^*(x\mid t)>0$, $p^*(x’\mid t)>0$, and $p_Y^{\dop(x)} \neq p_Y^{\dop(x’)}$, so that this inequality is strict.
\end{proof}

\section{Supplementary Material on Interventions on Variable Abstractions}
\label{sec:appendix-int-on-abst}

\begin{remark}[About the definition of Intervention Decoder]
  \label{rem:int-decoder-def}
  Notice that the intervention decoder is fixed if one has the encoder (and a choice of prior).
  This may seem counter-intuitive, as one might expect that additional information about the system is necessary to understand how interventions on a variable abstraction map $T$ to interventions on the abstracted variables $X$.
  For example, the effect of changing temperature on particle velocities may depend on the mechanism of temperature change (e.g., the type of radiator used).
  However, in our case, this concern does not arise because we are not trying to model a pre-existing variable $T$ (such as a latent variable generating the data) that has a predetermined way of influencing $X$.
  Instead, our $T$ is an artificial, conceptual abstraction of $X$.
  Thus, we can define the mechanism that propagates interventions on $T$ to $X$.
  The key requirement is that it is applied consistently.
  Our choice to ensure consistency of the intervention decoder with the encoder leads to a natural and straightforward interpretation.
  This approach allows us, for example, to determine how to intervene on genes in a genetic mutation experiment, with the decoder providing a way to map our chosen $T = t$ to the corresponding values of $X$.
\end{remark}

\subsection{Effect of a Variable Abstraction Intervention on the Target Variable}
For $Y\in \mathbf{V}$, one has that
\begin{equation}
  \label{eq:pydot}
  \begin{split}
    p^{\dop(T=t)}_{Y}(y) = p(&y\mid\dop(T=t)) =  \sum_{v_{1},\ldots,v_{m}} p(v_{1},\ldots,v_{m}, y \mid \dop(T=t)) \\
      &=  \sum_{x}p^{\star}(x\mid t) \sum_{v_{1},\ldots,v_{m}} p(v_{1},\ldots,v_{m}, y \mid \dop(X=x)) \\
      &= \sum_{x}p^{\star}(x\mid t)p(y\mid \dop(X=x)),
  \end{split}
\end{equation}
 where $V_{1},\ldots,V_{m}$ are all the variables in $\mathbf{V}$ except for $X$ and $Y$.

\subsection{Proof of the Backdoor Adjustment Formula for Variable Abstractions}
\label{sec:caus-abs-results}

\begin{proof}[Proof of \Cref{prop:backdoor-for-abs}] %
  \label{proof:backdoor}
  \begin{equation}
    \begin{split}
      p(y \mid \dop(t)) &= \sum_{x}p^{\star}(x\mid t) p^{\C;\dop(X=x)}(y) \\
                  &= \sum_{x}p^{\star}(x\mid t) \sum_{z} p^{\C}(z) p^{\C}(y\mid x,z) \\
                  &= \sum_{z}p^{\C}(z) \sum_{z} p^{\C}(y\mid x,z)  \frac{q(t\mid x) p^{\star}(x)}{\sum_{\dot{x}}q(t\mid \dot{x}) p^{\star}(\dot{x})},
      \end{split}
  \end{equation}
  where the second equality follows from the backdoor adjustment formula \citep{pearl2009causality}, and the last one from the definition of the intervention decoder.
\end{proof}

\section{Supplementary Material on Comparing Variable Abstractions}
\label{sec:appendix-comparing-abst}

\begin{proposition} %
  \label{prop:cong-is-equiv-rel}
  The relation $\cong$ of equivalence of abstractions is an equivalence relation.
\end{proposition}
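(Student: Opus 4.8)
The plan is to verify the three defining properties of an equivalence relation---reflexivity, symmetry, and transitivity---directly from the definition of $\cong$, leveraging only the elementary facts that the identity map is a bijection, that the inverse of a bijection is a bijection, and that the composition of two bijections is a bijection.

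For reflexivity, given a representation $T$ I would take $\sigma = \mathrm{id}_{\supp(T)}$, a bijection from $\supp(T)$ to itself; the required identity $q_{T\mid X}(t\mid x) = q_{T\mid X}(\sigma(t)\mid x)$ holds trivially for all $t\in\supp(T)$ and $x\in\supp(X)$, so $T\cong T$. For symmetry, suppose $T_1\cong T_2$ via a bijection $\sigma\colon\supp(T_1)\to\supp(T_2)$. Then $\sigma^{-1}\colon\supp(T_2)\to\supp(T_1)$ is again a bijection, and for any $t_2\in\supp(T_2)$, writing $t_1 = \sigma^{-1}(t_2)$ and applying the hypothesis to $t_1$ yields $q_{T_2\mid X}(t_2\mid x) = q_{T_2\mid X}(\sigma(t_1)\mid x) = q_{T_1\mid X}(t_1\mid x) = q_{T_1\mid X}(\sigma^{-1}(t_2)\mid x)$ for all $x\in\supp(X)$, which witnesses $T_2\cong T_1$. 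For transitivity, if $T_1\cong T_2$ via $\sigma$ and $T_2\cong T_3$ via $\tau\colon\supp(T_2)\to\supp(T_3)$, then $\tau\circ\sigma\colon\supp(T_1)\to\supp(T_3)$ is a bijection, and chaining the two hypotheses gives $q_{T_1\mid X}(t_1\mid x) = q_{T_2\mid X}(\sigma(t_1)\mid x) = q_{T_3\mid X}(\tau(\sigma(t_1))\mid x) = q_{T_3\mid X}((\tau\circ\sigma)(t_1)\mid x)$ for all $t_1\in\supp(T_1)$ and $x\in\supp(X)$, so $T_1\cong T_3$.

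There is no substantive obstacle in this argument; the only point requiring minor care is bookkeeping of domains and codomains, so that inversion and composition of the witnessing maps are taken over the supports $\supp(T_i)$ rather than over the full ranges $R_{T_i}$. Since the definition of $\cong$ already stipulates that the witnessing maps are bijections between supports, this is immediate, and nothing beyond the definition and the basic closure properties of bijections is needed.
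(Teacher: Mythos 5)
Your proposal is correct and follows essentially the same route as the paper's proof: reflexivity via the identity map, symmetry via the inverse bijection applied at $t_{1}=\sigma^{-1}(t_{2})$, and transitivity via composition of the two witnessing bijections, with the same chain of equalities of encoder values. Nothing further is needed.
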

\begin{proof}
  Reflexivity of $\cong$ is immediate: just take $\sigma = \mathrm{id}$.
  We now show that $\cong$ is symmetric.
  Assume that $T_{1} \cong T_{2}$, with corresponding bijection $\sigma$.
  Denote by $\sigma^{-1}$ its inverse.
  Let $t_{2} \in \supp(T_{2})$ and $t_{1} = \sigma^{-1}(t_{2})$.
  Then,
  \begin{equation}
    \begin{split}
      q_{T_{2} \mid X}(t_{2} \mid x) &= q_{T_{2} \mid X}(\sigma(t_{1}) \mid x) \\
        &= q_{T_{1} \mid X}(t_{1} \mid x)  \\
        &= q_{T_{1} \mid X}(\sigma^{-1}(t_{2}) \mid x).
    \end{split}
  \end{equation}
  This shows that $T_{2} \cong T_{1}$.
  Finally, we show transitivity.
  Let $T_{1}$, $T_{2}$ and $T_{3}$ be v-abstractions of $X$ such that $T_{1}\cong T_{2}$ and $T_{2}\cong T_{3}$, with bijections $\sigma_{12}$ and $\sigma_{23}$.
  Then,
  \begin{equation}
    \begin{split}
      q_{T_{1} \mid X}(t_{1} \mid x) &= q_{T_{2} \mid X}(\sigma_{12}(t_{1}) \mid x) \\
        &= q_{T_{3} \mid X}(\sigma_{23}(\sigma_{12}(t_{1})) \mid x).
    \end{split}
  \end{equation}
  Hence $T_{1} \cong T_{3}$ with bijection $\sigma_{23} \circ \sigma_{12}$.
\end{proof}

\begin{proposition}
  \label{prop:vi-zero-means-equiv}
  Let $T_{1}$ and $T_{2}$ be v-abstractions of $X$.
  If $\VI(T_{1}, T_{2}) = 0$, then $T_{1} \cong T_{2}$.
  Furthermore, the converse also holds if $T_{1}$ is a deterministic v-abstraction of $X$.
\end{proposition}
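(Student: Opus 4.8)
The plan is to translate the vanishing of the variation of information into deterministic relationships between $T_1$ and $T_2$, and then transfer those relationships to the encoders. Throughout I would work on the joint distribution of $(X, T_1, T_2)$ under which $T_1$ and $T_2$ are the given representations of $X$ (each a function of $X$ and an independent noise), so that $q_{T_i \mid X}(\cdot \mid x) = p(T_i = \cdot \mid X = x)$ for every $x \in \supp(X)$.

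\emph{Direction $\VI(T_1, T_2) = 0 \Rightarrow T_1 \cong T_2$.} Since $\VI(T_1, T_2) = H(T_1 \mid T_2) + H(T_2 \mid T_1)$ is a sum of non-negative terms, $\VI(T_1, T_2) = 0$ forces $H(T_1 \mid T_2) = H(T_2 \mid T_1) = 0$. By the standard characterization of zero conditional entropy \citep{thomas2006information}, this means $T_2 = \sigma(T_1)$ almost surely for some map $\sigma \colon \supp(T_1) \to \supp(T_2)$, and symmetrically $T_1 = \tau(T_2)$ almost surely for some $\tau \colon \supp(T_2) \to \supp(T_1)$. Composing these almost-sure identities — and noting that, evaluated at a fixed value of the support, they become plain equalities — gives $\tau \circ \sigma = \mathrm{id}_{\supp(T_1)}$ and $\sigma \circ \tau = \mathrm{id}_{\supp(T_2)}$, so $\sigma$ is a bijection with inverse $\tau$. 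Then, for a fixed $t_1 \in \supp(T_1)$, from $T_2 = \sigma(T_1)$ almost surely the event $\{T_1 = t_1\} \setminus \{T_2 = \sigma(t_1)\}$ is null, and from $T_1 = \sigma^{-1}(T_2)$ almost surely the event $\{T_2 = \sigma(t_1)\} \setminus \{T_1 = t_1\}$ is null, so the two events coincide up to a null set. Conditioning on $X = x$ for any $x \in \supp(X)$ (which preserves null sets) yields $q_{T_1 \mid X}(t_1 \mid x) = q_{T_2 \mid X}(\sigma(t_1) \mid x)$. As this holds for all $t_1 \in \supp(T_1)$ and $x \in \supp(X)$, the bijection $\sigma$ witnesses $T_1 \cong T_2$.

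\emph{Direction $T_1 \cong T_2$ and $T_1$ deterministic $\Rightarrow \VI(T_1, T_2) = 0$.} Since $T_1$ is deterministic, for each $x \in \supp(X)$ there is a unique $\phi(x) \in \supp(T_1)$ with $q_{T_1 \mid X}(\phi(x) \mid x) = 1$, and hence $T_1 = \phi(X)$ almost surely. Let $\sigma$ be the bijection from $T_1 \cong T_2$; then $q_{T_2 \mid X}(t_2 \mid x) = q_{T_1 \mid X}(\sigma^{-1}(t_2) \mid x)$ equals $1$ exactly when $t_2 = \sigma(\phi(x))$, so $T_2$ is deterministic with $T_2 = \sigma(\phi(X)) = \sigma(T_1)$ almost surely. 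Thus $T_2$ is almost surely a function of $T_1$, so $H(T_2 \mid T_1) = 0$, and $T_1 = \sigma^{-1}(T_2)$ almost surely gives $H(T_1 \mid T_2) = 0$; therefore $\VI(T_1, T_2) = 0$.

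The step I expect to require the most care is the passage, in the first direction, from ``$H(T_i \mid T_j) = 0$'' to a genuine bijection between the \emph{supports} for which the encoder identity holds pointwise (not merely almost surely): this requires tracking exactly which values lie in $\supp(T_1)$, $\supp(T_2)$, and $\supp(X)$, and using that conditioning on the positive-probability event $X = x$ cannot put mass on a previously null set. Everything else is bookkeeping. It is also worth noting why determinism cannot be dropped in the converse: if $T_1$ were stochastic, the joint $p(t_1, t_2) = \sum_{x} p(x)\, q_{T_1 \mid X}(t_1 \mid x)\, q_{T_1 \mid X}(\sigma^{-1}(t_2) \mid x)$ would in general place mass off the graph of $\sigma$, giving $\VI(T_1, T_2) > 0$ even though $T_1 \cong T_2$.
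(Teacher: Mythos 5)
Your proof is correct and follows essentially the same route as the paper's: $\VI(T_1,T_2)=0$ is unpacked into the two vanishing conditional entropies, which yield a bijection $\sigma$ between the supports that is then transferred to the encoders by conditioning on the positive-probability event $X=x$; conversely, determinism of $T_1$ plus the equivalence bijection gives $T_2=\sigma(T_1)$ almost surely, so both conditional entropies vanish. Your converse is slightly more streamlined than the paper's (which checks $p(t_2\mid t_1)\in\{0,1\}$ via an explicit sum over $x$), but the underlying argument is the same.
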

\begin{proof} %
  Note that $\VI(T_{1}, T_{2}) = 0$ if and only if $H(T_{2}\mid T_{1}) = H(T_{1}\mid T_{2}) = 0$.
  Recall that
  $H(T_{2}\mid T_{1}) = -\sum_{t_{1}\in\supp(T_{1})} p(t_{1})\sum_{t_{2}\in \supp(p_{T_{2}\mid T_{1}})}p(t_{2}\mid t_{1})\log p(t_{2}\mid t_{1})$,
  which is zero if and only if $p(t_{2}\mid t_{1}) \in \{0,1\}$ for all $t_{1}, t_{2}$ in the respective supports.
  By the same token, $p(t_{1}\mid t_{2}) \in \{0,1\}$.
  Define $\sigma\colon \supp(T_{1}) \to \supp(T_{2})$ by $\sigma(t_{1}) = \arg_{t_{2}}(p(t_{2} \mid t_{1}) = 1)$.
  Similarly, define $\sigma^{-1}\colon \supp(T_{2}) \to \supp(T_{1})$ by $\sigma^{-1}(t_{2}) = \arg_{t_{1}}(p(t_{1} \mid t_{2}) = 1)$.
  Then, $\sigma^{-1}$ is the inverse of $\sigma$.
  To see this, note that, for all $t_{2}^{*} \in \supp(T_{2})$, one has $\sigma(\sigma^{-1}(t_{2}^{*})) = \arg_{t_{2}}(p(t_{2}\mid \sigma^{-1}(t_{2}^{*})) = 1)$.
  But $p(t^{*}_{2}\mid \sigma^{-1}(t_{2}^{*})) =
    \frac{p(\sigma^{-1}(t_{2}^{*}) \mid t^{*}_{2}) p(t^{*}_{2})}{p(\sigma^{-1}(t_{2}^{*}))} =
    \frac{1 \cdot p(t^{*}_{2})}{p(\sigma^{-1}(t_{2}^{*}))}$, which cannot be zero.
    Since $p(t_{2}\mid t_{1}) \in\{0,1\}$ for all $t_{1}, t_{2}$, one then has that $p(t^{*}_{2}\mid \sigma^{-1}(t_{2}^{*})) = 1$, and therefore $\sigma(\sigma^{-1}(t_{2}^{*})) = t_{2}^{*}$.
  Now, let $t_{1} \in \supp(T_{1})$ and $t_{2} = \sigma(t_{1})$.
  Then $q_{T_{1}\mid X}(t_{1}\mid x) = q_{T_{1}\mid X}(\sigma^{-1}(t_{2})\mid x) = q_{T_{2}\mid X}(t_{2}\mid x) = q_{T_{2}\mid X}(\sigma(t_{1}) \mid x)$.
  Hence $T_{1} \cong T_{2}$. \\
  Assume now that $T_{1}$ is a deterministic v-abstraction of $X$, \emph{i.e.}, $q_{T_{1}\mid X}(t_{1}\mid x) \in \{0,1\}$ for all $t_{1}\in R_{T}, x\in \supp(X)$.
  Assume further that $T_{1}\cong T_{2}$ with bijection $\sigma$.
  Then clearly $T_{2}$ is also deterministic.
  Making use of the definition of v-abstraction, we have $p(t_{2} \mid t_{1}) = \sum_{x}  p(t_{2}\mid x) p(x\mid t_{1})$.
  If there is no $x$ for which $p(t_{2}\mid x) = 1$, then this is zero.
  Otherwise, $p(t_{2} \mid t_{1}) = \sum_{x}  p(t_{2}\mid x) \frac{p(t_{1}\mid x)p(x)}{\sum_{\dot{x}}p(t_{1}\mid \dot{x})p(\dot{x})} = \sum_{x}q_{T_{2}\mid X}(t_{2}\mid x) \frac{q_{T_{1}\mid X}(t_{1}\mid x)p(x)}{\sum_{\dot{x}}q_{T_{1}\mid X}(t_{1}\mid \dot{x})p(\dot{x})} = \sum_{x \in S} \frac{q_{T_{1}\mid X}(t_{1}\mid x)p(x)}{\sum_{\dot{x}}q_{T_{1}\mid X}(t_{1}\mid \dot{x})p(\dot{x})} = \sum_{x \in S} \frac{q_{T_{2}\mid X}(\sigma(t_{1})\mid x)p(x)}{\sum_{\dot{x}}q_{T_{2}\mid X}(\sigma(t_{1})\mid \dot{x})p(\dot{x})}$,
  where $S$ is the set of values $x$ where $q_{T_{2}\mid X}(t_{2}\mid x) = 1$.
  Notice that, for $x \in S$, we have that $q_{T_{2}\mid X}(\sigma(t_{1})\mid x)$ is zero if $\sigma(t_{1}) \ne t_{2}$ (and is one otherwise).
  Therefore, $p(t_{2}\mid t_{1}) = 0$ if $\sigma(t_{1}) \ne t_{2}$, and in case $\sigma(t_{1}) = t_{2}$ we have
  $p(t_{2} \mid t_{1}) =  \frac{\sum_{x \in S} q_{T_{2}\mid X}(\sigma(t_{1})\mid x)p(x)}{\sum_{\dot{x} \in S}q_{T_{2}\mid X}(\sigma(t_{1})\mid \dot{x})p(\dot{x})} = 1$.
  Hence $p(t_{2} \mid t_{1})$ can only take the values $0$ and $1$.
  From the expression for $H(T_{2}\mid T_{1})$ above, one concludes that $H(T_{2} \mid T_{1}) = 0$.
  A similar argument holds for $H(T_{1} \mid T_{2})$.
  This shows that $\VI(T_{1}, T_{2}) = 0$.
\end{proof}

\section{The Learning Algorithms}

A natural way for us to construct a method for minimizing \Cref{eq:cib_lag} would be to replicate the procedure in \citet{tishby2000information} using an implicit analytical solution for $\nabla_{q_{T\mid X}}\LIB = 0$ to formulate the minimization of $\LIB$ as a multiple-minimization problem solvable by a coordinate descent algorithm (see \Cref{sec:preliminaries}).
However, the expression resulting from $\nabla_{q_{T\mid X}}\LCIB = 0$ is much more complicated than that for $\nabla_{q_{T\mid X}}\LIB = 0$.
Furthermore, the derivation in \citet{tishby2000information} relied on the fact that mutual information can be written as a KL divergence, but the causal information gain cannot  \citep{simoes2024fundamental}.
So, although we did not show it is impossible to find a coordinate descent algorithm of the style of the Blahut-Arimoto algorithm, it is clear that a derivation of such an algorithm would have to take a very different form from the one in \citet{tishby2000information}.
Instead, we opted to find the minima of $\LCIB$ while staying constrained to the probability simplices using two types of projected gradient descent algorithms, which we now discuss.

\label{sec:learning_algo}
Since $T$ is discrete, each conditional distribution $q_{T\mid X=x}$ can be seen as a vector of probabilities $(q_{T=t \mid X=x})_{t\in R_{T}}$ in the Euclidean space $\R^{\abs{R_{T}}}$.
Furthermore, since $X$ is also discrete, the encoder $q_{T \mid X}$ can be seen as a vector of probabilities $$(q_{T = t_{1}\mid X=x_{1}},\ldots,q_{T = t_{1}\mid X=x_{\abs{R_{X}}}}, q_{T = t_{2}\mid X=x_{1}}, \ldots, q_{T = t_{\abs{R_{T}}}\mid X=x_{\abs{R_{X}}}}) \in \R^{\abs{R_{T}} \cdot \abs{R_{X}}}.$$
The $\abs{R_{T}}$-dimensional subspace of $\R^{\abs{R_{T}} \cdot \abs{R_{X}}}$ corresponding to $q_{T\mid X=x}$ will be denoted $\mathbb{R}^{(\cdot\mid x)}$.
Each conditional distribution vector $(q_{T=t \mid X=x})_{t\in R_{T}}$ must lie in the probability simplex $\Delta^{\abs{R_{T}} - 1}$, so that $q_{T\mid X}$ must belong to the cartesian product of simplices $\Delta \coloneqq \bigtimes_{x \in R_{X}} \Delta^{\abs{R_{T}} - 1}$.
The causal information bottleneck problem can then be formulated as finding the global minimum of $\LCIB[q_{T\mid X}]$ with $q_{T\mid X}$ constrained to $\Delta$.
We will use projected Gradient Descent (pGD) and projected simulated annealing gradient descent (pSAGD).
The pSAGD algorithm takes its inspiration from simulated annealing methods which help hill-climbing algorithms avoid local minima in discrete search spaces \citep{norvig2021artificial}.
The pSAGD is a simulated annealing version of pGD which does something analogous.
Being itself a local search algorithm, pSAGD still may converge to local minima.
Re-running pSAGD a few times (effectively using an ensemble of pSAGD learners) will increase the likelihood of finding the global minimum.

\paragraph{Projected Gradient Descent}
In projected gradient descent (pGD) \citep{bertsekas2016nonlinear}, each iteration step is of the form:
\begin{equation}
  q_{T\mid X}^{(t+1)} = \Pi_{\Delta}\left( q_{T\mid X}^{(t)} - \alpha \nabla_{q_{T\mid X}}(\LCIB)\raisebox{-0.2em}{\Big\vert}_{q_{T\mid X}^{(t)}} \right),
\end{equation}
where $\Pi_{\Delta}$ is the Euclidean projection onto the constraint space $\Delta$ and $\alpha$ is the learning rate for the gradient descent step.
It is easy to check that projecting onto $\Delta$ is equivalent to projecting each $q_{T\mid X=x}$ onto the probability simplex $\Delta^{\abs{R_{T}}-1}$.
In other words, we can apply the projection to each conditional distribution $q_{T\mid X=x} \in \mathbb{R}^{\abs{R_{T}}}$ separately.
That is,
\begin{equation}
  q_{T\mid X=x}^{(t+1)} = \Pi_{\Delta^{\abs{R_{T}} - 1}}\left( q_{T\mid X=x}^{(t)} - \alpha \nabla_{q_{T\mid X=x}}(\LCIB\big\vert_{\mathbb{R}^{(\cdot\mid x)}})\raisebox{-0.2em}{\Big\vert}_{q_{T\mid X=x}^{(t)}} \right).
\end{equation}
This is illustrated in \Cref{fig:simplex-proj}.
The Euclidean projection onto each probability simplex is done using our implementation of the algorithm for projection onto the probability simplex described in \citet{duchi2008efficient}.
Our implementation is vectorized, so that we can simultaneously apply the projection to each $q_{T\mid X=x}$ separately.

\paragraph{Projected Simulated Annealing Gradient Descent}
In complex systems, multiple local minima of the CIB may exist.
Projected simulated annealing gradient descent (pSAGD) addresses this issue by introducing randomness, or ``jittering'', into the gradient descent process, controlled by a temperature parameter.
At each iteration $t$, instead of taking a step in the direction of the negative gradient (followed by a projection) at the current point $q^{(t)}$, there is a chance of jumping to a random neighbor, that is, a point $\tilde{q} \in \Delta$ obtained by uniformly sampled from the sphere centered on $q^{(t)}$ of radius equal to the learning rate (and applying a projection onto the simplex if necessary).
The probability of accepting the proposed $\tilde{q}$ depends on the temperature and the quality of the proposal.
Specifically, if the loss $L = \LCIB$ at $\tilde{q}$ is lower than $L(q^{(t)})$, then $\tilde{q}$ is accepted, \emph{i.e.}, $q^{(t+1)}=\tilde{q}$.
Otherwise, the acceptance probability is given by $\exp(-\frac{L(\tilde{q}) - L(q^{(t)})}{T^{(t)}})$, where $T^{(t)}$ is the current temperature.
The temperature decreases according to a chosen cooling rate $c\in (0, 1)$, such that $T^{(t+1)}=c\cdot T^{(t)}$.
If $\tilde{q}$ is not accepted, a pGD step is taken instead.

\begin{figure}[h!]
  \centering
  \begin{tikzpicture}[scale=3.5]
    \draw[thick,fill=olive!40!green!50,opacity=0.5] (0.5,0) -- (0,0.5) -- (-0.25,-0.25) -- cycle;
    \node[color=olive!40!green] (sp) at (-0.20, 0.25) {$\Delta^{2}$};

    \node[below] (n1) at (0.5, 0) {\tiny$(1,0,0)$};
    \node[xshift=13pt, yshift=5pt] (n2) at (0, 0.5) {\tiny$(0,1,0)$};
    \node[xshift=10pt, yshift=-5pt] (n3) at (-0.25, -0.25) {\tiny$(0,0,1)$};
    \node[circle, fill, inner sep=1pt] (qpre) at (0.5, 0.4) {};
    \node[right] at (qpre) {\small$q'$};
    \node[circle, fill, inner sep=1pt] (qt) at (0.08, 0.22) {};
    \node[above] at (qt) {\small$q^{(t)}$};
    \node[circle, fill, inner sep=1pt] (q) at (0.20, 0.10) {};
    \node[right] at (q) {\small$q^{(t+1)}$};

    \draw[-latex] (0,0) -- (1,0) node[right] {\small$q_{T=t_{1}, X=x}$};
    \draw[-latex] (0,0) -- (0,1) node[above] {\small$q_{T=t_{2}, X=x}$};
    \draw[-latex] (0,0) -- (-0.5,-0.5) node[below] {\small$q_{T=t_{3}, X=x}$};
    \draw[-latex, dashed, color=red] (qpre) -- node[pos=0.1,below,inner sep=9pt] {\tiny$\Pi_{\Delta^{2}}$} (q);
    \draw[-latex, dashed, color=blue] (qt) -- node[pos=0.5,above,inner sep=2pt] {\tiny GD} (qpre);
    \draw[-latex, dashed, color=orange] (qt) -- node[pos=0.55,below left,inner sep=1pt] {\tiny pGD} (q);
  \end{tikzpicture}
  \caption{Illustration of a Projected Gradient Descent step in the $\abs{R_{T}}$-dimensional slice $\mathbb{R}^{(\cdot\mid x)} \cong \mathbb{R}^{\abs{R_{T}}}$ of $\mathbb{R}^{\abs{R_{T}}\cdot \abs{R_{X}}}$ corresponding to the conditional distribution $q_{T\mid X=x}$, for the case where $\abs{R_{T}}=3$.
  Here, $q^{(t)}$ denotes $q^{(t)}_{T\mid X=x}$ and $q'$ denotes the output of the Gradient Descent (GD) step, that is, $q' = q_{T\mid X=x}^{(t)} - \alpha \nabla_{q_{T\mid X=x}}(\LCIB\big\vert_{\mathbb{R}^{(\cdot\mid x)}})\raisebox{-0.2em}{\Big\vert}_{q_{T\mid X=x}^{(t)}}$.}
  \label{fig:simplex-proj}
\end{figure}

\paragraph{Implementation details}

We employed gradient clipping to prevent gradient explosions, which can destabilize the optimization process.
Additionally, a cycle detection mechanism was incorporated, so that the optimization process stops if the algorithm cycles between two encoders, at which point the best solution is selected.
We observed that using relatively large learning rates was necessary for effective training.
This appears to stem from the small gradients encountered during optimization.
Unsurprisingly, we also observed that, the higher the dimension of the search space, the larger the learning rates need to be.
A cooling rate of $0.99$ was selected for the simulated annealing schedule, as it provides a reasonable decay curve for the probability of acceptance across different temperatures and typical step sizes, balancing the exploration and exploitation phases effectively.
During the experiments, we noted that many local minima at which the optimizers got stuck corresponded to encoders that did not fully utilize the entire range of $T$.
We call such v-abstractions non-surjective.
We want to avoid such v-abstractions, since we would like the chosen range $R_{T}$ to be respected.
To address this, we introduced a penalty term in the loss function that discouraged the optimizer from approaching non-surjective encoders.
We tested this for $\gamma=1$, where this adjustment greatly improved the accuracy of the optimizer, that is, the frequency at which it identified the ground truth.
Despite achieving these high accuracies, we employed ensembles of optimizers to further enhance the likelihood of finding the global minimum.
All experiments, along with the exploration of different parameter settings and additional results, are available in the accompanying code repository.

\section{Supplement to the Experimental Results}
\label{sec:appendix-exp-details}

\begin{figure}[h]
  \centering
  \scalebox{1.0}{
  \begin{tikzpicture}[mynode/.style={circle,draw=black,fill=white,inner sep=0pt,minimum size=0.8cm}]
  \node[mynode] (x) at (0,0) { $X$};
  \node[mynode] (y) at (2,0) { $Y$};
  \path [draw,->] (x) edge[-latex] (y);
  \node (assigns) at (1.4,-1.4)
    {
      $
      \begin{cases}
        Y \vcentcolon= X + N_{Y} \mod 2 \\
        N_{X} \sim U\{0,\ldots,6\} \\
        N_{Y} \sim \mathrm{Bern}(u_{Y}), u_{Y} < 0.5
      \end{cases}
      $
    };
  \end{tikzpicture}
  } %
  \caption{SCM for the Odd and Even experiment.}%
  \label{fig:odd-and-even}
\end{figure}

\subsection{Other Noise Distributions}
\label{sec:appendix-other-noises}
We tested our method on different parameterizations of the noise distributions, for each model from \Cref{sec:experimental-results}.
The results are presented in parallel coordinate plots (\Cref{fig:exps-other-noises}).
Notice the similarity to the findings reported in the main text (\Cref{fig:exp-results-fixed-param}).
The analysis in \Cref{sec:experimental-results} remains valid for the results in \Cref{fig:exps-other-noises}.
In particular, conditions (a), (b), and (c) are still satisfied when using these other noise distributions.
Interactive HTML versions for these plots are available in the code repository.

\begin{figure}[h]
  \centering
  \begin{subfigure}[t]{0.32\textwidth}
    \centering
    \includegraphics[trim=50 0 0 0, clip, scale=\imagescale]{./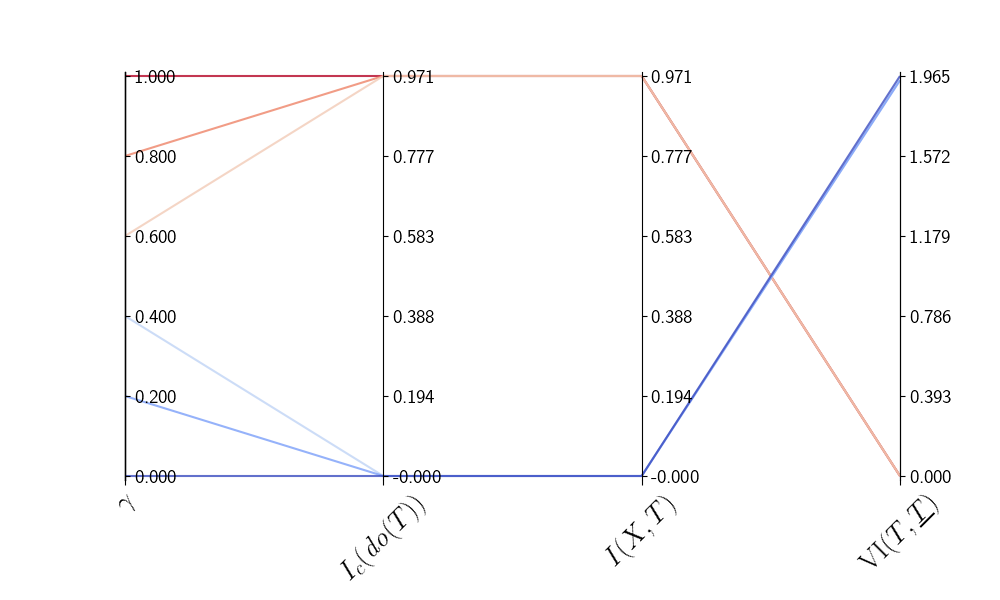}
    \caption{$u_{Y} = 0.0$}
  \end{subfigure}
  \hfill
  \begin{subfigure}[t]{0.32\textwidth}
    \centering
    \includegraphics[trim=50 0 0 0, clip, scale=\imagescale]{./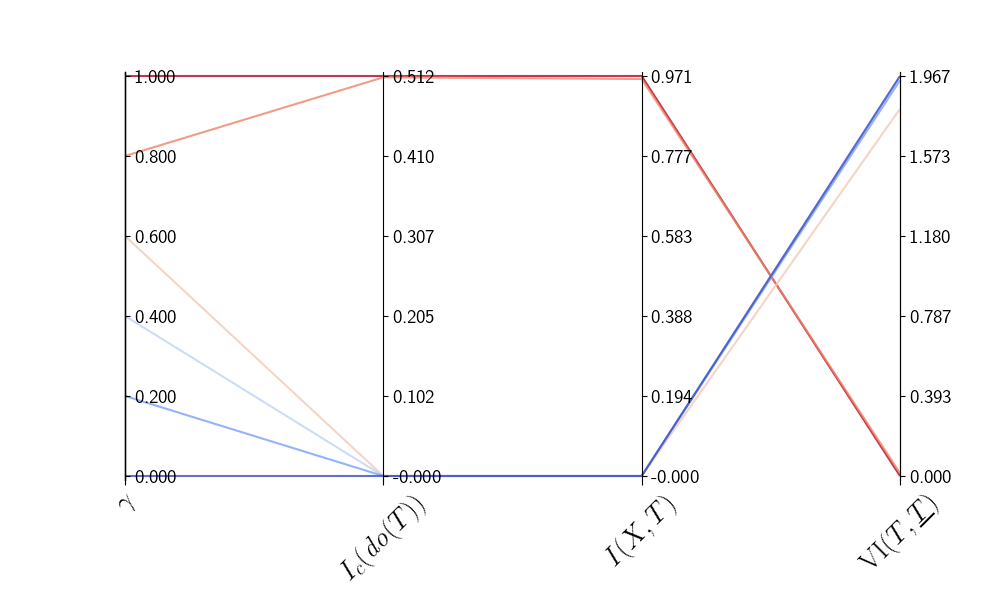}
    \caption{$u_{Y} = 0.1$}
  \end{subfigure}
  \hfill
  \begin{subfigure}[t]{0.32\textwidth}
    \centering
    \includegraphics[trim=50 0 0 0, clip, scale=\imagescale]{./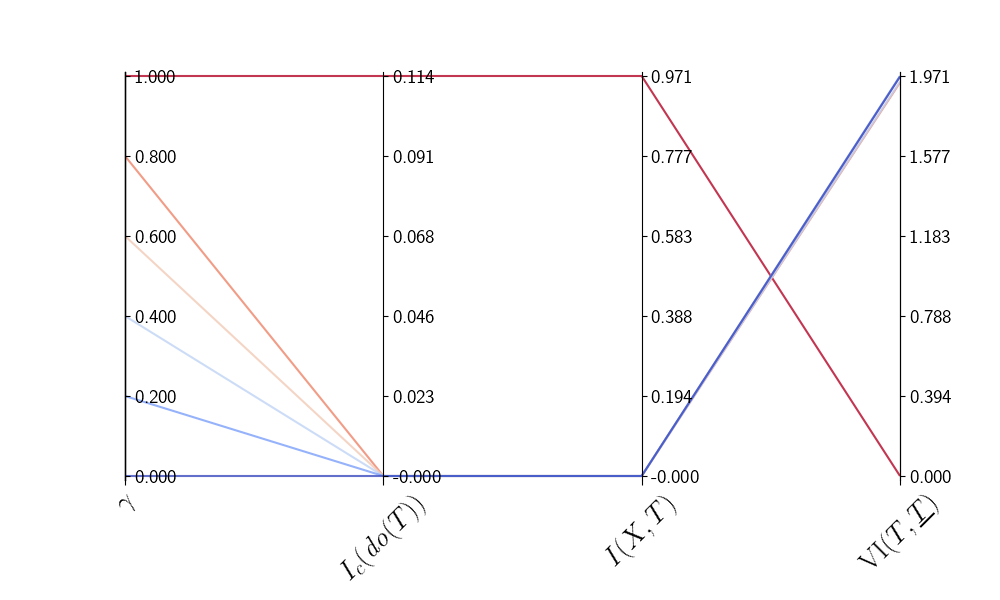}
    \caption{$u_{Y} = 0.3$}
  \end{subfigure}
  \\
  \hfill
  \begin{subfigure}[t]{0.32\textwidth}
    \centering
    \includegraphics[trim=50 0 0 0, clip, scale=\imagescale]{./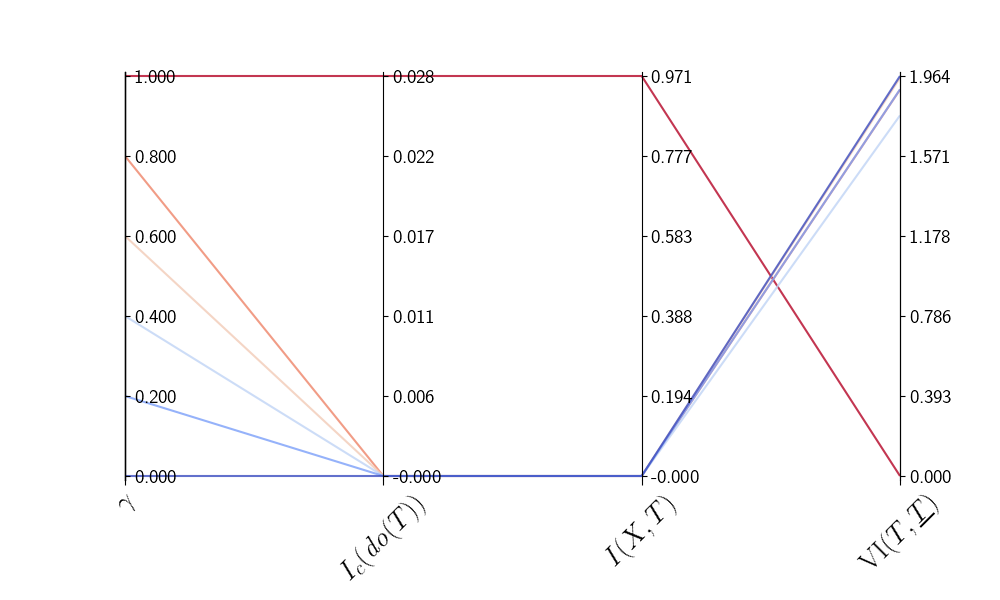}
    \caption{$u_{Y} = 0.4$}
   \end{subfigure}
  \hfill
  \begin{subfigure}[t]{0.32\textwidth}
    \centering
    \includegraphics[trim=50 0 0 0, clip, scale=\imagescale]{./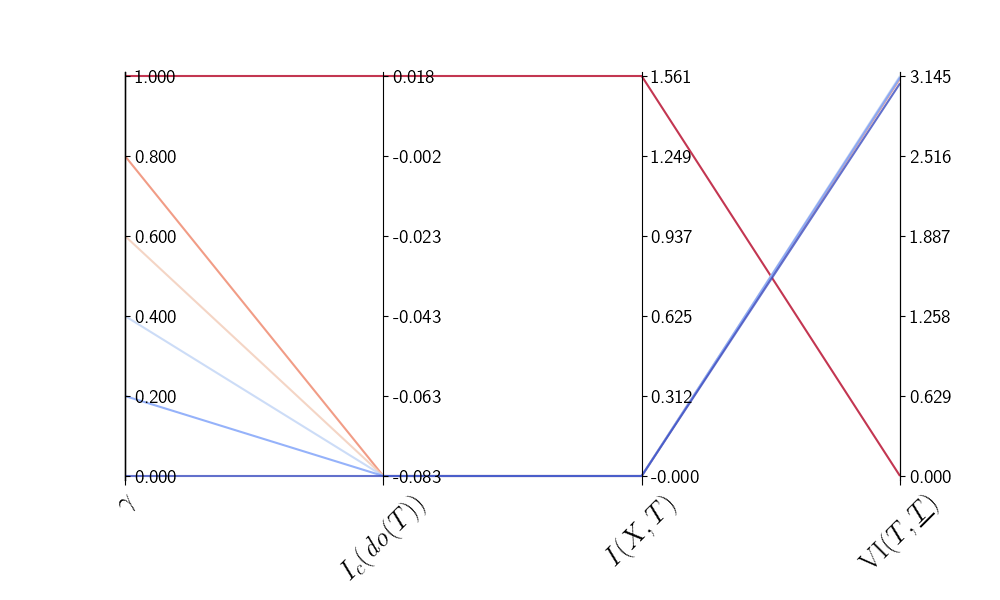}
    \caption{$r_{Y} = 0.1$}
  \end{subfigure}
  \hfill
  \begin{subfigure}[t]{0.32\textwidth}
    \centering
    \includegraphics[trim=50 0 0 0, clip, scale=\imagescale]{./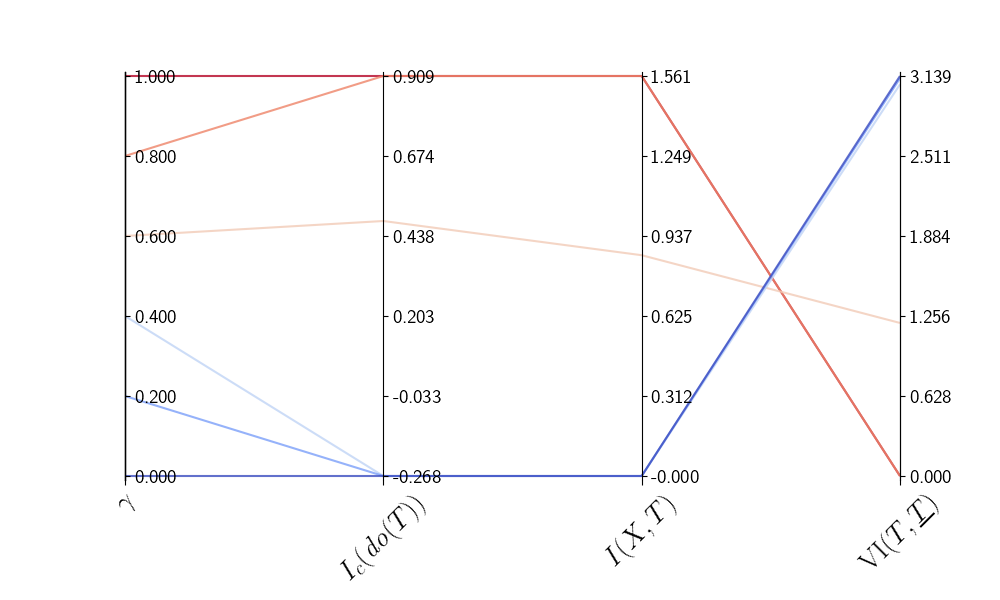}
    \caption{$r_{Y} = 0.9$}
  \end{subfigure}
  \\
  \hfill
  \begin{subfigure}[t]{0.32\textwidth}
    \centering
    \includegraphics[trim=50 0 0 0, clip, scale=\imagescale]{./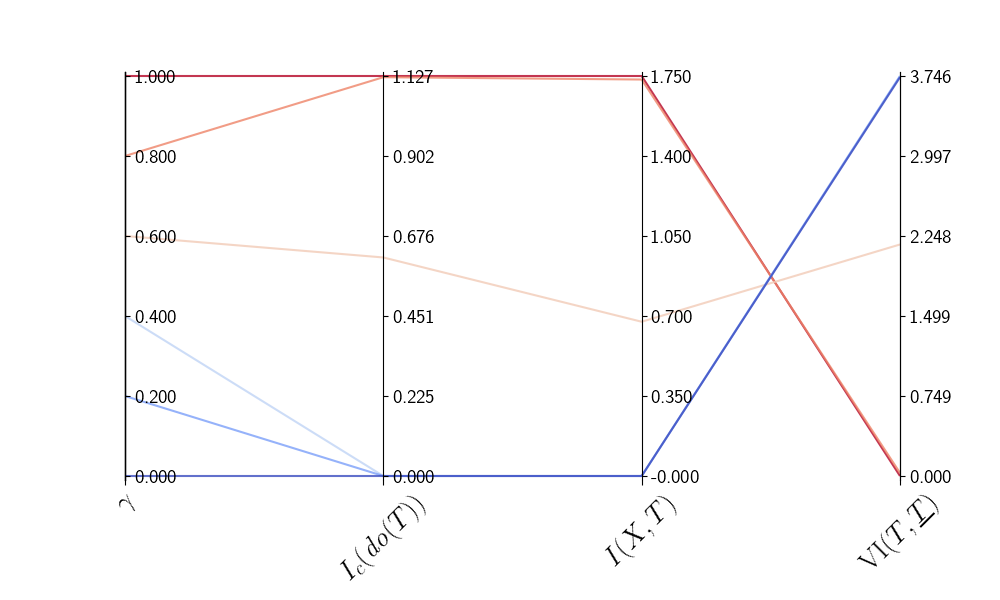}
    \caption{$b_{X_{i}} = 0.5$, $b_{Y} = 0.1$ and $b_{S} = 0.5$}
   \end{subfigure}
  \hfill
  \begin{subfigure}[t]{0.32\textwidth}
    \centering
    \includegraphics[trim=50 0 0 0, clip, scale=\imagescale]{./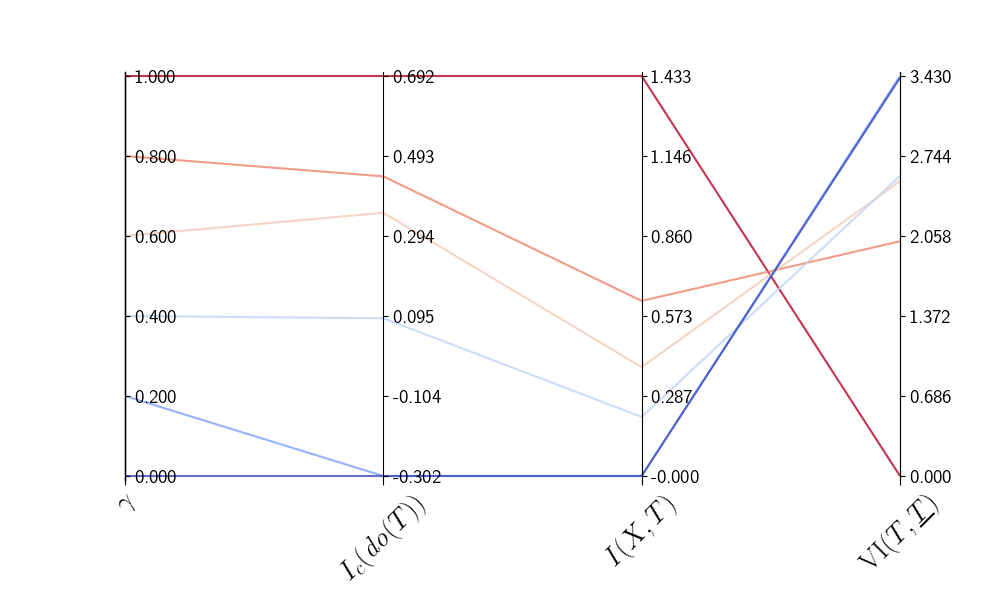}
    \caption{$b_{X_{i}} = 0.3$, $b_{Y} = 0.4$ and $b_{S} = 0.5$}
  \end{subfigure}
  \hfill
  \begin{subfigure}[t]{0.32\textwidth}
    \centering
    \includegraphics[trim=50 0 0 0, clip, scale=\imagescale]{./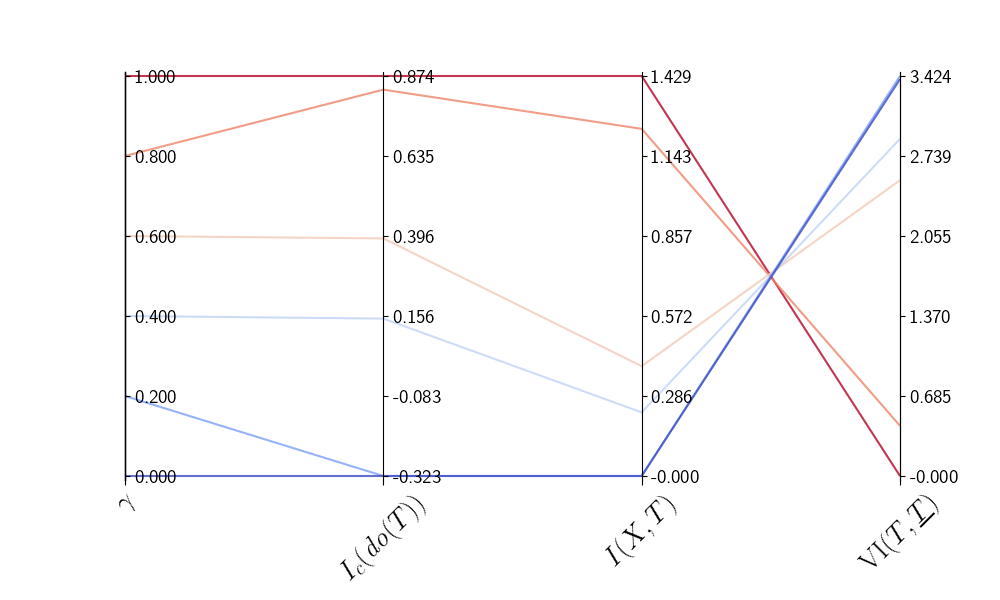}
    \caption{$b_{X_{i}} = 0.3$, $b_{Y} = 0.1$ and $b_{S} = 0.8$}
  \end{subfigure}
  \caption{Experimental results for other noise distributions. Each line corresponds to the v-abstraction found for the chosen $\gamma \in \{0, 0.2, \ldots, 1.0\}$. Figures (a)-(d) refer to the Odd and Even experiment; (e) and (f) to the Confounded Addition experiment; and (g)-(i) to the Genetic Mutations experiment.}
  \label{fig:exps-other-noises}
\end{figure}

\subsection{Comparison with the Standard Information Bottleneck Method}
\label{sec:appendix-IB}

We ran the Information Bottleneck (IB) method on the three experiments, in order to compare the results with those that we had obtained using the CIB method, and confirm experimentally that the IB method fails in our task of learning OCVAs.
The full results are presented in \Cref{fig:ib_tests}.
The obvious difference in comparison to \Cref{fig:exp-results-fixed-param} is that the ground truth is never found, even when $\gamma=1$, in the Confounded Addition and Mutations experiments.
This was to be expected: the v-abstractions learned with the IB method do not successfully capture the aspects of $X$ which causally affect $Y$ whenever there is confounding.

Since in the Odd and Even experiment there is no confounding, the effect of interventions on $X$ is indistinguishable from the effect of conditioning on $X$, so that we expect the v-abstraction with the most causal control and the v-abstraction with the most predictive power over $Y$ to coincide. This is indeed what happens.

Let us now take a closer look at the $\gamma=1$ case, for the experiments with confounding.

\begin{figure}[h]
  \centering
  \begin{subfigure}[t]{0.32\textwidth}
    \centering
    \includegraphics[trim=50 0 0 0, clip, scale=\imagescale]{./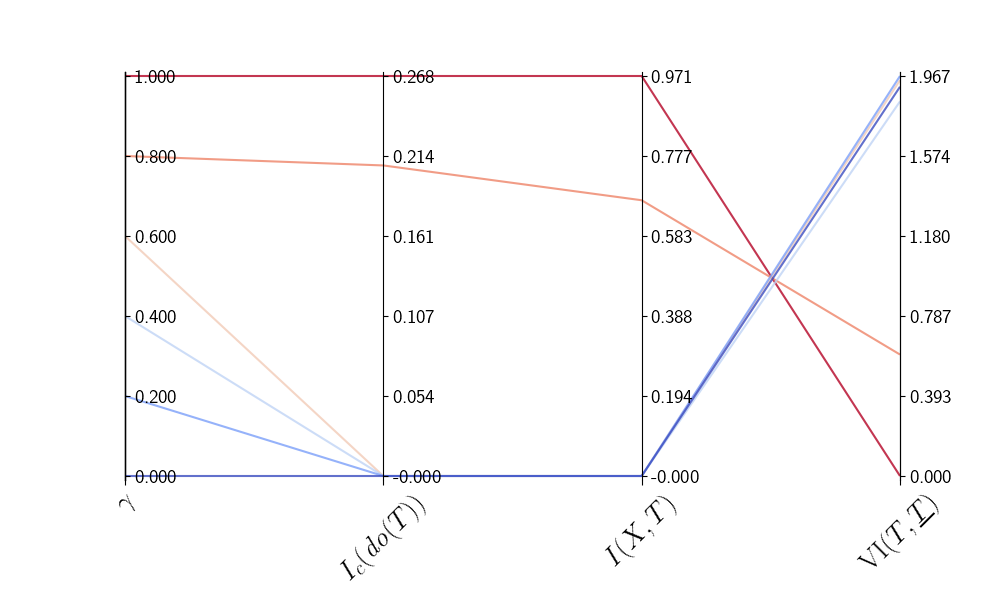}
    \caption{Odd and Even experiment}
  \end{subfigure}
  \hfill
  \begin{subfigure}[t]{0.32\textwidth}
    \centering
    \includegraphics[trim=50 0 0 0, clip, scale=\imagescale]{./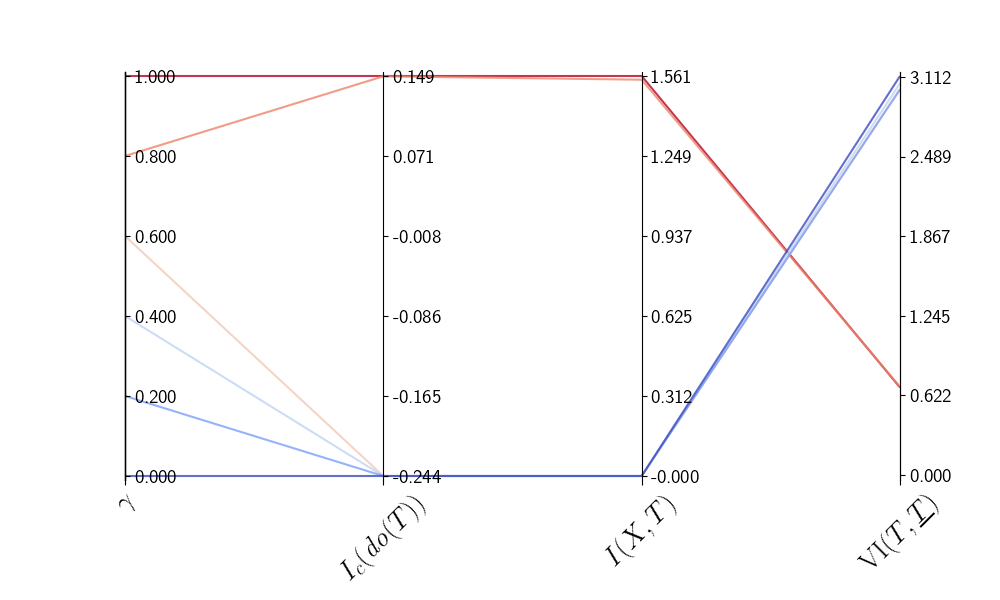}
    \caption{Confounded Addition experiment}
  \end{subfigure}
  \hfill
  \begin{subfigure}[t]{0.32\textwidth}
    \centering
    \includegraphics[trim=50 0 0 0, clip, scale=\imagescale]{./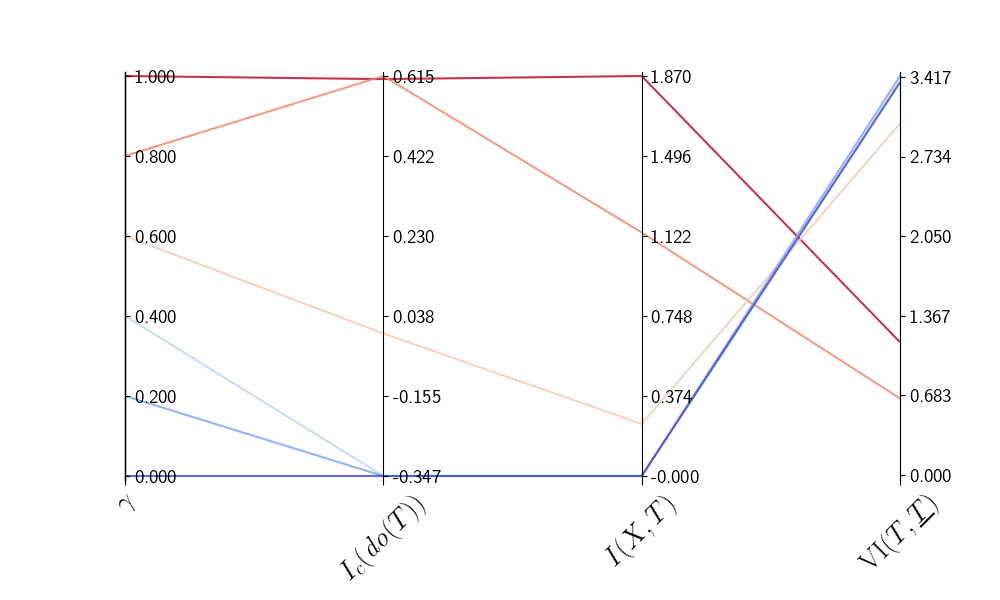}
    \caption{Mutations Experiment}
  \end{subfigure}
  \caption{Results from running the standard Information Bottleneck (IB) method on the three experiments, using the same noise distributions as in \Cref{fig:exp-results-fixed-param}.
  The IB fails to find the ground truth when there is confounding.}
  \label{fig:ib_tests}
\end{figure}

The v-abstraction learned using the IB method (using $\gamma=1$) for the Confounded Addition experiment was the map presented in \Cref{table:ib-encoder-addition}.
Notice that this erroneously (from a causal perspective) maps the cases $(X_1 = 0, X_2 = 1)$ and $(X_1 = 1, X_2 = 0)$ to different values of $T$, even though these two cases result in exactly the same causal effect on $Y$ (but not the same (non-causal) predictive power over $Y$, due to the confounding through $W$).

\begin{table}[h]
\centering
\begin{tabular}{cc|c}
\toprule
$X_1$ & $X_2$ & $T$ \\
\midrule
0 & 0 & 0 \\
0 & 1 & 0 \\
1 & 0 & 1 \\
1 & 1 & 2 \\
\bottomrule
\end{tabular}
\caption{Encoder learned by the Information Bottleneck method (using $\gamma=1$), for the Confounded Addition experiment.}
\label{table:ib-encoder-addition}
\end{table}

For the Genetic Mutations experiment, the v-abstraction learned using the IB method (using $\gamma=1$) was the map presented in \Cref{table:ib_mutations_encoder}
Notice that, in contrast with the ground truth $\underline{T}$ (which is successfully learned by the CIB), this v-abstraction fails to distinguish the cases $(X_1 = 1, X_2 = 1, X_3 = 0, X_4 = x_4)$ and $(X_1 = 1, X_2 = 1, X_3 = 1, X_4 = x_4)$; that is, it fails to capture the protective effect of $X_3$ discussed in \Cref{sec:experimental-results}.
Hence, the IB method fails to learn the complex, epistatic interactions between the mutations that the CIB successfully learns.
Notice further that the v-abstraction learned by the IB is not invariant on $X_4$. This is due to the confounding between $X_4$ and $Y$, which the CIB, in contrast, successfully handles.

\begin{table}[h]
\centering
\begin{tabular}{cccc|c}
\toprule
$X_1$ & $X_2$ & $X_3$ & $X_4$ & $T$ \\
\midrule
0 & 0 & 0 & 0 & 1 \\
0 & 0 & 1 & 0 & 1 \\
0 & 1 & 0 & 0 & 3 \\
0 & 1 & 1 & 0 & 3 \\
1 & 0 & 0 & 0 & 3 \\
1 & 0 & 1 & 0 & 3 \\
1 & 1 & 0 & 0 & 2 \\
1 & 1 & 1 & 0 & 2 \\
0 & 0 & 0 & 1 & 1 \\
0 & 0 & 1 & 1 & 0 \\
0 & 1 & 0 & 1 & 3 \\
0 & 1 & 1 & 1 & 3 \\
1 & 0 & 0 & 1 & 3 \\
1 & 0 & 1 & 1 & 3 \\
1 & 1 & 0 & 1 & 2 \\
1 & 1 & 1 & 1 & 2 \\
\bottomrule
\end{tabular}
\caption{Encoder learned by the Information Bottleneck method, for the Genetic Mutations experiment.}
\label{table:ib_mutations_encoder}
\end{table}

\end{document}